\theoremstyle{plain}
\newtheorem{theorem}{Theorem}[section]
\newtheorem{proposition}[theorem]{Proposition}
\newtheorem{lemma}[theorem]{Lemma}
\theoremstyle{definition}
\newtheorem{definition}[theorem]{Definition}
\newtheorem{assumption}[theorem]{Assumption}
\theoremstyle{remark}
\pgfplotsset{compat=1.18}
\newcommand{\R}{\mathbb{R}}    
\newcommand{\Rp}{\R_{>0}}      
\newcommand{\Rn}{\R_{\ge 0}}   
\renewcommand{\b}{\mathbf}     
\newcommand{\tb}{\textbf}      
\renewcommand{\d}{\mathrm{d}}  
\newcommand{\E}{\mathbb{E}}    
\renewcommand{\t}{\mathrm}     
\newcommand{\mD}{\mathcal{D}} 
\newcommand{\mP}{\mathcal{P}}  
\newcommand{\mS}{\mathcal{S}}  
\newcommand{\mA}{\mathcal{A}}  
\newcommand{\mL}{\mathcal{L}}  
\newcommand{\mT}{\mathcal{T}}  
\newcommand{\mM}{\mathcal{M}}  
\newcommand{\mG}{\mathcal{G}}  %
\newcommand{\mF}{\mathcal{F}}  
\newcommand{\mC}{\mathcal{C}} 
\DeclareMathOperator*{\argmin}{arg\,min}
\icmltitlerunning{To Switch or Not to Switch? Balanced Policy Switching in Offline Reinforcement Learning}
\begin{document}

\twocolumn[
\icmltitle{To Switch or Not to Switch?\\ Balanced Policy Switching in Offline Reinforcement Learning}



\icmlsetsymbol{equal}{*}

\begin{icmlauthorlist}
\icmlauthor{Tao Ma}{lse}
\icmlauthor{Xuzhi Yang}{lse}
\icmlauthor{Zolt{\'a}n Szab{\'o}}{lse}
\end{icmlauthorlist}

\icmlaffiliation{lse}{Department of Statistics,
  London School of Economics,
  Houghton Street, London, WC2A 2AE, UK}

\icmlcorrespondingauthor{Tao Ma}{t.ma9@lse.ac.uk}

\icmlkeywords{Machine Learning, ICML}

\vskip 0.3in
]



\printAffiliationsAndNotice{}  

\begin{abstract}
Reinforcement learning (RL)---finding the optimal behaviour (also referred to as policy) maximizing the collected long-term cumulative reward---is among the most influential approaches in machine learning with a large number of successful applications. In several decision problems, however, one faces the possibility of policy switching---changing from the current policy to a new one---which incurs a non-negligible cost, 
and in the decision one is limited to using historical data without the availability for further online interaction. Despite the inevitable importance of this offline learning scenario, to our best knowledge, very little effort has been made to tackle the key problem of balancing between the gain and the cost of switching in a flexible and principled way. Leveraging ideas from the area of optimal transport, we initialize the systematic study of policy switching in offline RL. We establish fundamental properties and design a Net Actor-Critic algorithm for the proposed novel switching formulation. Numerical experiments demonstrate the efficiency of our approach on multiple robot control benchmarks of the Gymnasium and traffic light control from SUMO-RL.
\end{abstract}

\section{Introduction} \label{sec:intro}

Reinforcement learning \citep[RL,][]{puterman2014markov} is a fundamental tool in machine learning for advising agents to make sequential decisions, which has recently witnessed an unprecedented breakthrough from both theoretical and application perspective \citep{sutton2018reinforcement}. Successful applications of RL include for instance beating human expert players in games \citep{mnih2013playing, silver2017mastering}, dynamic treatment and automated medical diagnosis in healthcare \citep{yu2021reinforcement, sun2024accountability}, robotics behaviour improvement \citep{kober2013reinforcement, tang2024deep} and autonomous driving \citep{kiran2021deep}. Due to its flexible design, RL is able to accommodate various important forms of optimal decision making.

In a broad sense, RL problems can be divided into two groups, online and offline RL, each of which has its distinct strengths and limitations. In the online setting, the agent can actively explore the unknown environment by executing actions according to the policies, and make use of the received rewards to adjust the behaviour for a higher future gain \citep{arulkumaran2017deep}. However, in scenarios where random exploration may be impractical or even dangerous \citep{singla2021reinforcement}, gathering a static dataset is often a more adequate choice. Motivated by such constraints, offline RL has emerged as a promising approach \citep{levine2020offline}. In the offline setting, some policies have already been applied in the environment and generated a large offline dataset. With such data, the agent cannot make further exploration, but is supposed to learn a better policy solely based on the available information \citep{haarnoja2018soft, fujimoto2019off, kumar2020conservative,
matsushima2021deploymentefficient,
fujimoto2021minimalist, kostrikov2021offline, an2021uncertainty, ma2024framework}. Due to the discrepancies between the policies that generated the offline data and the policy learned by an offline algorithm, solving decision problems offline is highly challenging, with expected sub-optimal performance \citep{kumar2019stabilizing} compared to their online counterparts.

Despite the success of RL algorithms in the offline setting \citep{kumar2020conservative, kostrikov2021offline}, one key but moderately studied question is the cost of policy switching. Significant cost can occur when changing from an old policy to a new one. One common example is the adaptive policies in traffic light control \cite{lopez2018microscopic}. Policy adjustment can lead to additional delays or even compromise traffic safety \cite{han2023leveraging}. So a policy switch for such traffic control introduces significant costs (such as temporary traffic congestion, update of facilities). Other examples include the cost of updating hardware devices \citep{mirhoseini2017device}, the fees to employ human annotators for large models \citep{he2023annollm}, the reorganization expenses of a company \citep{lopucki2004determinants}, or the additional efforts to modify webpage designs \citep{theocharous2015ad}. However, modelling such policy switching cost is a highly non-trivial task. For example, in the perspective of employees in a company, learning a new skill normally requires more efforts than relocating to a new team with similar tasks. Such scenario of strategy change and cost management is called organizational change management in the theory of business \citep{by2005organisational, lauer2010change}. On the other hand, in the existing literature of RL to our best knowledge, the focus was only on somewhat simplistic schemes of costs, which include the global and the local switching cost \citep{bai2019provably, gao2021provably, wang2021provably, qiao2022sample}. Both  definitions target to measure if two policies (or policies conditional on states) are the same or not, but ignore \emph{how} the two (families of) policies are \emph{different} from each other. In addition, these costs with limited forms of expressiveness were developed for the online setting.

In this work we focus on the offline RL setting. Our aim is  to initialize the formulation and understanding of the key properties of policy switching in this scenario. Throughout the paper, we will consider the following prototype offline RL task: the agent has been relying on an old policy for a long term, with which rich offline data has been generated. Now in the beginning of a new episode, there is one chance for the agent to execute a policy with the possibility of switching to a new one, where the change can have a non-negligible cost. Our \tb{goals} are three-fold:
\begin{enumerate}[labelindent=0em,leftmargin=1.3em,topsep=0cm,partopsep=0cm,parsep=0cm,itemsep=1mm]
    \item How to rigorously formulate such offline policy switching problem, and balance between the potential gain and the cost?
    \item Is there a way to construct a family of switching costs that are flexible and expressive?
    \item How to design an algorithm to robustly find a better policy in the new problem formulation?
\end{enumerate}

Given these three questions, our \tb{contributions} can be summarized as follows.

\begin{enumerate}[labelindent=0em,leftmargin=1.3em,topsep=0cm,partopsep=0cm,parsep=0cm,itemsep=1mm]
    \item We propose a new policy switching problem, by defining the novel net values and net Q-functions, and establish their fundamental properties which are in sharp contrast to their classic RL counterparts. 
    \item Motivated by mass transportation, we propose a flexible class of cost functions, which includes former definitions (local and global costs) as special cases.
    \item An algorithm, named Net Actor-Critic (NAC), is proposed to find a new policy which improves the old policy towards the optimal in terms of net value, which is the first offline method for policy switching problem with costs to our best knowledge. 
\end{enumerate}

The paper is structured as follows. We begin with preliminaries on notations, classic RL settings and a review of former switching costs in Section~\ref{sec:preliminaries}. In Section~\ref{sec: formulation} we introduce the notions of net value and net Q-function, with which the novel policy switching problem is formulated. A new family of cost functions are also provided relying on optimal transport (OT). We present our NAC algorithm to approximate the switch-optimal policy  in Section~\ref{sec: NAC}; the numerical efficiency of the approach is demonstrated in Section~\ref{sec:numerical-experiments}. 
Further algorithmic details, extensions of the problem formulation, proofs and implementation details of experiments are provided in the Appendix.

\section{Preliminaries} \label{sec:preliminaries}
In this section we provide the necessary background for the manuscript. Notations are introduced in Section~\ref{sec: notations}, and the classic RL settings and formerly proposed policy switching costs are elaborated in Section~\ref{sec: RL}.

\subsection{Notations}\label{sec: notations}
We introduce a few notations used throughout the paper. A $\sigma$-algebra on a set $X$ is denoted by $\Sigma_X$. Given measurable spaces $(X, \Sigma_X)$ and $(Y, \Sigma_Y)$, $(X \times Y, \Sigma_X \otimes \Sigma_Y)$ is the product space, where $\Sigma_X \otimes \Sigma_Y$ is the smallest $\sigma$-algebra generated by $\{A \times B: A \in \Sigma_X, B \in \Sigma_Y\}$. The set of all probability measure on $(X,\Sigma_X)$ is denoted by $\mP(X)$. Let $\mathcal{F}$ be the collection of all real-valued functions on $X$, $\|f\|_{\infty}:= \sup_{x \in X}|f(x)|$ ($f\in \mathcal{F}$), and define $\mG (X, \|\cdot\|_{\infty}) : = \{f\in \mF: \|f\|_{\infty}<+\infty\}$; $\mG (X, \|\cdot\|_{\infty})$ is known to be complete. For any set $A\subseteq X$, $I_A : X \to \{0, 1\}$ is the indicator function of $A$: $I_A(x) = 1$ if $x \in A$, $I_A(x) =0$ otherwise. For a set   $B$, $|B|$ stands for its cardinality. The set of non-negative real numbers is denoted by $\Rn$; similarly, $\Rp$ stands for the set of positive reals. Let $\mathrm{Id}$ be the identity map. For any positive integer $K$, $[K]:=\{1, \ldots, K\}$. For any $a, b \in \R$,  $a \wedge b := \min\{a, b\}$. A map $T$ from a metric space $(Z, \rho)$ into itself is called contraction if there exists a constant $c_T\in [0,1)$ such that $\rho\bigl(T(z_1), T(z_2)\bigr) \leq c_T \rho(z_1, z_2)$ for all $z_1, z_2 \in Z$.

\subsection{Classic RL Settings}\label{sec: RL}
In this subsection, we recall a few fundamental concepts of RL from the formulation of MDPs, alongside with the formerly proposed policy switching costs.

\begin{figure*}
    \centering
\begin{subfigure}[b]{.45\linewidth}
\centering
    \begin{tikzpicture}[scale = 1.3]
\draw[thick, ->] (-0.3, 0) -- (3.2, 0);
\draw[thick, ->] (-0.3, -0.8) -- (3.2, -0.8); 
\draw[thick, ->] (-0.3, -1.4) -- (3.2, -1.4); 
\draw[thick, ->, purple] (3.3, 0.1) -- (3.3, -1.5); 

\node[] at (1.5,-0.3) {$\ldots \quad \ldots \quad \ldots$};
\node[font = \tiny, anchor=center] at (1.4, 0.2) {$\hspace{-0.5cm} s_0^1 \hspace{0.5cm} a_0^1 \hspace{0.5cm} r_0^1 \hspace{0.4cm} \ldots \ \ldots \hspace{0.5cm} s_H^1$};
\node[font = \tiny, anchor=center] at (1.4, -0.6) {$\hspace{-0.3cm}\hspace{0cm} s_0^{K-1} \hspace{0.1cm} a_0^{K-1} \hspace{0.1cm} r_0^{K-1} \hspace{0cm} \ldots \ \ldots \hspace{0.4cm} s_H^{K-1}$};
\node[font = \tiny, anchor=center] at (1.4, -1.2) {$\hspace{-0.6cm} s_0^K \hspace{0.4cm} a_0^K \hspace{0.4cm} r_0^K \hspace{0.8cm} \ldots \ \ldots $};
\node[font = \small] at (-0.7, 0) {$\pi_1$};
\node[font = \small] at (-0.6, -0.8) {$\pi_{K-1}$};
\node[font = \small] at (-0.7, -1.4) {$\pi_{K}$};
\node[font = \small, rotate = 270, purple] at (3.5, -0.7) {Online learning};
\end{tikzpicture}
\caption{Episodic online learning.}
\end{subfigure}
\hspace{0.5cm}
\begin{subfigure}[b]{.45\linewidth}
\centering
\begin{tikzpicture}[scale = 1.3]
\draw[thick, ->] (-0.3, 0) -- (3.2, 0);
\draw[thick, ->] (-0.3, -0.8) -- (3.2, -0.8); 
\draw[thick, dashed, ->] (-0.3, -1.4) -- (3.2, -1.4); 

\node[] at (1.5,-0.3) {$\ldots \quad \ldots \quad \ldots$};
\node[font = \tiny] at (1.4, 0.2) {$\hspace{-0.5cm} s_0^1 \hspace{0.5cm} a_0^1 \hspace{0.5cm} r_0^1 \hspace{0.4cm} \ldots \ \ldots \hspace{0.5cm} s_H^1$};
\node[font = \tiny] at (1.4, -0.6) {$\hspace{-0.3cm}\hspace{0cm} s_0^{K-1} \hspace{0.1cm} a_0^{K-1} \hspace{0.1cm} r_0^{K-1} \hspace{0cm} \ldots \ \ldots \hspace{0.4cm} s_H^{K-1}$};
\node[font = \tiny] at (1.4, -1.2) {$\hspace{-0.6cm} s_0^K \hspace{0.4cm} a_0^K \hspace{0.4cm} r_0^K \hspace{0.8cm} \ldots \ \ldots  $};

\draw [decorate,decoration={brace,amplitude=5pt,raise=1pt},yshift=120pt]
node [font = \tiny, black,midway,yshift=-4.7cm, xshift =-0.7cm] {$\pi_{\mathrm{o}}$} (-0.3,-5.1) -- (-0.3,-4.2) ;
\node[font = \small, fill = cyan!30] at (-0.8, -1.4) {$\pi_{\mathrm{n}} = ?$};

\node[font = \small, rotate=270, purple] at (3.5, -0.2) {Offline data};
\draw[purple] (-0.9,-0.9) rectangle (3.3, 0.4);
\end{tikzpicture}
\caption{Policy switching problem based on offline data.}
\label{fig:policyswitchingOffline}
\end{subfigure}
\caption{Comparison between previous setting of online learning and ours.}
\label{fig:online-vs-offline}
\end{figure*}
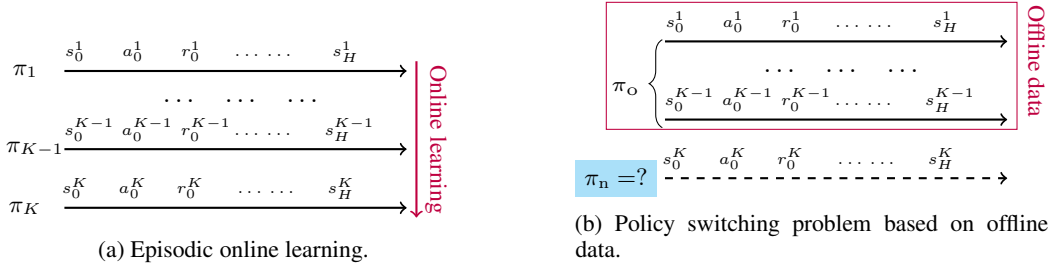

\tb{MDPs.} We consider a time-homogeneous 
MDP, denoted by ${\mM} = (\mS, \mA, P, R, \gamma)$, with $(\mS, \Sigma_\mS)$ and $(\mA, \Sigma_\mA)$ being measurable state and action spaces, respectively. 
Given any pair $(s,a)\in \mS \times \mA$, $P(\cdot | s, a): \Sigma_\mS \rightarrow [0, 1]$ is the transition kernel and $r(\cdot|s,a)$ encodes a stochastic reward with mean $R(s, a)$ and bounded support. Finally, 
$\gamma \in [0, 1)$ is the discount factor for future rewards. Given an MDP, a policy $\pi=\{\pi(\cdot|s), s\in\mS\}$ of an agent is a collection of conditional distributions on $(\mA, \Sigma_\mA)$, and $\Pi$ is the collection of all policies. With these notations at hand, an MDP proceeds as follows. The agent starts at a fixed initial state $s_0\in \mS$. At any step $t\geq0$
, the agent is at state $s_t \in \mS$, selects action $a_t\sim\pi(\cdot|s_t)$, receives a reward $r_t\sim r(\cdot|s_t, a_t)$, and is transitioned to $s_{t+1}\sim P(\cdot|s_t, a_t)$, the process of which creates one transition tuple $(s_t, a_t, r_t, s_{t+1})\in \mS \times \mA \times \R \times \mS$.

\tb{Evaluation \& optimality.}
For the purpose of policy evaluation and optimization, the \emph{value} at state $s$ and the \emph{Q-function} at state-action pair $(s,a)$ of $\pi$ are respectively defined as 
\begin{align*}
    V^{\pi}(s) &:= \mathbb{E}_{\pi} \left\lbrace \sum_{t=0}^{\infty} \gamma^t r_t \bigg| s_0 =  s \right\rbrace,\\
    Q^{\pi}({s, a}) &:= \mathbb{E}_{\pi} \left\lbrace \sum_{t=0}^{\infty}  \gamma^t r_t \bigg| s_0 = s, a_0 = a \right\rbrace,
\end{align*}
where $\E_{\pi}[\cdot]$ denotes the expectation according to $\pi$. 
At a state $s$, the optimal value is $V^*(s):=\max_\pi V^\pi(s)$. With a state-action pair $(s,a)$, the optimal Q-function is $Q^*(s,a):=\max_\pi Q^\pi(s,a)$, which are both taken over all policies. With two policies $\pi_1,\pi_2\in \Pi$, we say that $\pi_1$ is at least as good as $\pi_2$ if $V^{\pi_1}(s)\geq V^{\pi_2}(s)$ for all $s\in\mS$. The optimal policy is then defined as one that is at least as good as any other policy.
It is known that there always exists an \emph{optimal policy}, and for any optimal policy $\pi^*$, $Q^{\pi^\ast}(s,a)=Q^*(s,a)$ for all $(s,a)\in\mS\times\mA$.

\textbf{Online \& offline RL.} We now provide a  description of online and offline RL for convenient comparison; see Fig.~\ref{fig:online-vs-offline} for a visual illustration. In a given MDP, online learning is when the agent is allowed to switch policies throughout the process of learning. In practice, one specific example is when there are a total number of $K$ episodes, and the agent can choose a new policy $\pi_k$ for the $k$-th episode ($k\in[K]$). The data directly generated by policy $\pi_k$ proposed by the agent can be collected in the episode $k$, which further helps learning the next policy  $\pi_{k+1}$. On the other hand, offline learning is when a fixed dataset, containing  transition tuples by following some policy $\pi_\t{o}$ not proposed by the agent, is provided to the agent. And one needs to learn a better policy $\pi_\t{n}$, to apply in the following steps, only using this dataset, without any further interaction with the environment.

\textbf{Switching cost.} The limited coverage of switching cost formulations in the literature \citep{bai2019provably, gao2021provably, wang2021provably, qiao2022sample}, to our best knowledge, all focus on the online learning setting. For a finite $\mS$, the formerly proposed global and local switching costs \citep{bai2019provably} with $K$ episodes are respectively
\begin{align}
    C^{\mathrm{gl}}(\pi_1, \ldots, \pi_K) &= \sum_{k = 1}^{K-1} I_{\{\pi_k \not= \pi_{k+1}\}},  \label{eq:global-cost}
    \\ 
    C^{\mathrm{loc}}(\pi_1, \ldots, \pi_K) &= \sum_{k = 1}^{K-1}\sum_{s\in \mS} I_{\{\pi_{k}(\cdot|s) \neq \pi_{k+1}(\cdot|s)\}}. \label{eq:local-and-global-cost}
\end{align}
As long as the policy is changed, global switching cost will increase by $1$, while the increase in local switching cost is determined by how many states on which the conditional distributions are changed, which can be seen as a more fine-grained version of the global cost. 

The \tb{primary challenges} tackled in this paper are two-fold. First, our goal is to address the offline setting where an agent is only allowed to switch policy once, and this switch has a non-negligible cost. Second, the local switching cost is agnostic w.r.t.\ how different two distributions $\pi(\cdot|s)$ and $\pi^\prime(\cdot|s)$ are (it increases by $1$ as long as they are not identical); we aim to take into account that distributions far away are expected to incur higher costs than two similar ones.

\section{Problem Formulation}\label{sec: formulation}
In order to address the challenges outlined in Section~\ref{sec:preliminaries}, we introduce  the net value and net Q-function, with which a novel policy switching problem in offline RL is proposed in Section~\ref{sec: switchproblem}. The considered switching cost family is detailed in Section~\ref{sec: cost}, which significantly extends the existing switching costs.


\subsection{The Policy Switching Problem} \label{sec: switchproblem}
This section is dedicated to the formulation of our policy switching problem based on two new notions (net value and net Q-function) introduced below, followed by establishing some of their fundamental theoretical properties.

\tb{The question.} Enriched with the general setting of RL (Section~\ref{sec: RL}), we consider the following scenario; see Fig.~\ref{fig:policyswitchingOffline} for an illustration. There is a known old policy $\pi_\t{o}$, which has already been applied for $K-1$ episodes and led to the forming of an offline dataset $\mD$ with size $n=(K-1)H$. Now at $s_0\in\mS$, to begin the last episode, the agent needs to choose a policy. In addition, the agent is given a switching cost function $C$;  $C(\pi_\t{o}, \pi_\t{n})$ measures the policy switching cost (from $\pi_\t{o}$ to $\pi_\t{n}$) incurred in the beginning of the last episode.  This gives rise to the switching extension captured by the augmented tuplet ${\mM} = (\mS, \mA, P, R, \gamma,s_o, \pi_\t{o}, C)$.

There are two fundamental questions to be addressed:
\begin{enumerate}[labelindent=0em,leftmargin=1.3em,topsep=0cm,partopsep=0cm,parsep=0cm,itemsep=1mm]
    \item Is it profitable to switch to a different policy $\pi_\t{n}$ from the old $\pi_\t{o}$?
    \item In the case of switching, which new policy $\pi_\t{n}$ would better balance between the discounted total return in the last episode and the cost?
\end{enumerate}

We use the following two new notions to address these questions.
\begin{definition}[Net Value, Net Q-function]
For $s\in \mS, a\in \mA$, define the net value function and the net Q-function as
\begin{align*}
    V_N^{\pi_\t{n}}(s) &:=  V^{\pi_\t{n}}({s})  - C(\pi_\t{o}, \pi_\t{n}), \\
    Q_N^{\pi_\t{n}}(s,a) &:= Q^{\pi_{\t{n}}}(s, a) - C(\pi_\t{o},\pi_\t{n}).  
\end{align*}
\end{definition}
The value $V_N^{\pi_\t{n}}(s)$ measures after deducting the switching cost $C(\pi_\t{o},\pi_\t{n})$, the actual return in the last episode by adopting some new policy $\pi_\mathrm{n}$ and starting from state $s$.
Notice that the net value function is defined for all possible initial states $s\in \mS$ which will allow us to investigate optimality w.r.t. different initial states (Proposition~\ref{lemma: po_state}).
Using the analogue of business strategies, the one-time switching cost $C(\pi_\t{o}, \pi_\t{n})$ represents how much investment is needed to change to a new strategy $\pi_\t{n}$, the value $V^{\pi_\t{n}}({s})$ of a strategy is the total return in the future, while the net value $V_N^{\pi_\t{n}}(s)$ corresponds to the net income. The meaning of $Q_N^{\pi_\t{n}}(s,a)$ can be interpreted similarly.
In the traffic light control problem, the cost is due to the temporary congestion of the intersection, and the value is the evaluation of the efficiency for vehicles to pass the intersection.


Having defined net values, we now formulate the notion of switch-optimal policy while fixing the initial state $s_0 \in \mS$.
\begin{definition}[Switch-optimal policy]
Given an old policy $\pi_\t{o}$ and a fixed initial state $s_0 \in \mS$, a proposed policy $\pi_\t{n}^*$ is said to be switch-optimal if, for any policy $\pi_\t{c}\in\Pi$,
\begin{align}
    V_N^{\pi_\t{n}^*}(s_0) \geq V_N^{\pi_\t{c}}(s_0).
\end{align}
\end{definition}

Based on these definitions, our goal is to find a  switch-optimal policy $\pi_{\t{n}}^*$ or at least a policy $\pi_\t{n}$ which improves upon the old policy $\pi_\t{o}$ in terms of the net value function ($V_N^{\pi_\t{n}}(s_0) \geq V_N^{\pi_\t{o}}(s_0)$ where the r.h.s.\ equals to $V^{\pi_\t{o}}(s_0)$). If able to find such better $\pi_\t{n}$, the agent switches to this new policy; otherwise sticks with $\pi_\t{o}$ in the last episode. It should be noted that, although we try to find some policy close to the switch-optimal one, in the offline setting this can be rather challenging; so a new policy with significant improvement often already suffices. It is important to note that $\pi_{\mathrm{n}}^*$ need not have a significantly large value close to the optimal value $V^*(s_0)$, as our goal is not to find a policy with the maximal value function. Instead, we aim to find a policy that best balances the future return and the cost.     

Before moving on to our solution in the next section, we provide the following proposition for a deeper understanding of this new policy switching problem.  

\begin{assumption}\label{asmp: compact}
    The sets of values $\{V^\pi(s_0)\}_{\pi\in\Pi}$ and costs $\{C(\pi_\mathrm{o}, \pi)\}_{\pi\in\Pi}$ are compact.
\end{assumption}
Beyond existence, the following result shows various distinct characteristics [see Proposition \ref{lemma: nontrivial}-\ref{lemma: po_action}] specific to the switching setting.
\begin{proposition} \label{prop:meta}
For any MDP, the followings hold.
\begin{enumerate}[label=(\alph*), start = 1, ref=\thetheorem(\alph*),labelindent=0em,leftmargin=1.6em,topsep=0cm,partopsep=0cm,parsep=0cm,itemsep=1mm]
    \item\label{lemma: existence} If Assumption \ref{asmp: compact} is satisfied, then there always exists a switch-optimal policy.
    \item\label{lemma: nontrivial} There exists a cost function $C$, with which an optimal policy in value  is not switch-optimal in net value.
\end{enumerate}
If $\pi_{\mathrm{n}}^*$ is switch-optimal  in a fixed initial state ${s_0}$, then 
\begin{enumerate}[label=(\alph*), start = 3, ref=\thetheorem(\alph*),labelindent=0em,leftmargin=1.6em,topsep=0cm,partopsep=0cm,parsep=0cm,itemsep=1mm]
    \item if an alternative $s_0'\in \mS$ is fixed as initial state, then the switch-optimal policy may change.
    \label{lemma: po_state}
    \item it may not be the case that $Q_N^{\pi_{\t{n}}^*}(s_0, a)\geq Q_N^{\pi}(s_0, a)$ for all $a\in\mA$ and all  $\pi\in\Pi$. \label{lemma: po_action}
\end{enumerate}
\end{proposition}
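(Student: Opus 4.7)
The plan is to handle (a) with a direct compactness argument and (b)--(d) with explicit small-MDP counterexamples.

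For (a), $V_N^\pi(s_0) = V^\pi(s_0) - C(\pi_\t{o},\pi)$ is the image under continuous subtraction $(x,y)\mapsto x-y$ of the joint map $\pi \mapsto \bigl(V^\pi(s_0),\,C(\pi_\t{o},\pi)\bigr)$. Interpreting Assumption~\ref{asmp: compact} as joint compactness of the set of value--cost pairs (which is the reading that guarantees the supremum is attained), $\{V_N^\pi(s_0)\}_{\pi\in\Pi}$ is a compact subset of $\mathbb{R}$, so it contains its supremum; any $\pi_\t{n}^*\in\Pi$ attaining that supremum is switch-optimal by definition.

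For (b), I take any non-degenerate MDP with a policy $\pi_1$ satisfying $V^{\pi_1}(s_0)<V^{\pi^*}(s_0)$ for some value-optimal $\pi^*$, and pick a cost with $C(\pi_\t{o},\pi_1)=0$ and $C(\pi_\t{o},\pi^*)>V^{\pi^*}(s_0)-V^{\pi_1}(s_0)$; this yields $V_N^{\pi_1}(s_0)>V_N^{\pi^*}(s_0)$, so $\pi^*$ fails to be switch-optimal. For (c), I use an MDP whose state space splits into two independent absorbing components, reachable only from $s_0$ and $s_0'$ respectively, each with two self-loop actions of distinct per-step rewards; a cost counting the number of states where $\pi$ differs from $\pi_\t{o}$ then produces an asymmetric trade-off, so that the switch-optimal policy from $s_0$ switches only on the $s_0$-component while the switch-optimal from $s_0'$ switches only on the $s_0'$-component.

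The main obstacle is (d), since the classical Bellman-optimality intuition would suggest pointwise $Q$-dominance. The key observation is that the switching cost depends on the whole policy, not on the first action, so one can construct a competitor $\pi'$ that matches $\pi_\t{n}^*$ at $s_0$ but acts much better on a branch unreachable from $s_0$ under the action $\pi_\t{n}^*$ actually prescribes. Concretely, I take $\mS=\{s_0,s_1,s_2\}$ with action $a_1$ at $s_0$ sending to absorbing $s_1$ of per-step reward $R_1$, and action $a_2$ sending to $s_2$ equipped with two self-loops of rewards $R_h$ and $0$. Let $\pi_\t{o}$ and $\pi_\t{n}^*$ both take $a_1$ at $s_0$ and the zero-reward self-loop at $s_2$; a competitor $\pi'$ agreeing with $\pi_\t{n}^*$ at $s_0$ but taking the $R_h$-self-loop at $s_2$ incurs a small cost $c>0$. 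Since $s_2$ is off-path from $s_0$ under both, $V^{\pi'}(s_0)=V^{\pi_\t{n}^*}(s_0)$ and $V_N^{\pi_\t{n}^*}(s_0)>V_N^{\pi'}(s_0)$; with any policy deviating at $s_0$ penalised heavily so that it cannot beat $\pi_\t{n}^*$ either, $\pi_\t{n}^*$ is switch-optimal. Yet $Q^{\pi_\t{n}^*}(s_0,a_2)=0$ while $Q^{\pi'}(s_0,a_2)=\gamma R_h/(1-\gamma)$, so taking $R_h$ sufficiently large forces $Q_N^{\pi'}(s_0,a_2)>Q_N^{\pi_\t{n}^*}(s_0,a_2)$, establishing the claim.
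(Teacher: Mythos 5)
Your proposal is correct in all four parts, and for (a) and (b) it is essentially the paper's argument: compactness forces the supremum of $\{V_N^\pi(s_0)\}_{\pi\in\Pi}$ to be attained, and a cost gap $C(\pi_\t{o},\pi^*)-C(\pi_\t{o},\pi_1)$ exceeding the value gap $M=V^{\pi^*}(s_0)-V^{\pi_1}(s_0)$ defeats the value-optimal policy. For (c) and (d) you take a genuinely different route. The paper builds a single two-state MDP ($s_\alpha,s_\beta$ with ``stay''/``alternate'' actions, reward $1$ exactly when the agent lands in $s_\alpha$, $H=100$, $\gamma=1$) and a cost that is prohibitive ($500$) for stochastic policies, $25$ for constant deterministic policies and $50$ for non-constant ones; it then tabulates net values to see that the switch-optimal policy at $s_\alpha$ (``always stay'') differs from the one at $s_\beta$, and for (d) it simply compares $Q_N^{\pi_{\mathrm{n}1}}(s_\alpha,a_{\mathrm{alt}})=-25$ against $Q_N^{\pi_{\mathrm{n}3}}(s_\alpha,a_{\mathrm{alt}})=49$ within the same example. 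Your constructions instead isolate the mechanism: in (c) the two disconnected absorbing components with a local-count cost make the asymmetry transparent (switching on an unreachable component is pure cost with zero gain), and in (d) the off-path competitor $\pi'$ exposes exactly why pointwise $Q$-dominance fails, namely that $C$ depends on the whole policy rather than on on-path behaviour; the paper's single reusable example is more economical, while yours is more diagnostic. One point worth flagging on (a): you read Assumption~\ref{asmp: compact} as joint compactness of the set of value--cost \emph{pairs}, whereas the paper extracts nested convergent subsequences from the two separately compact sets; your reading is the safer one, since separate compactness alone does not guarantee that the limiting pair $(x,y)$ is realized by a single policy, which is what is needed to conclude that $\{V_N^\pi(s_0)\}_{\pi\in\Pi}$ is closed. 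A minor cosmetic issue in (d): $\gamma R_h/(1-\gamma)$ is the infinite-horizon value, and in the paper's finite-horizon setting it should be the truncated geometric sum, but this does not affect the argument.
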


\tb{Remarks:}
\begin{itemize}[labelindent=0em,leftmargin=1em,topsep=0cm,partopsep=0cm,parsep=0cm,itemsep=1mm]
    \item \tb{Existence}: Under mild assumptions, Proposition \ref{lemma: existence} guarantees the existence of a switch-optimal policy, which ensures that  the problem is well-posed. Proposition \ref{lemma: nontrivial} distinguishes the policy switching problem from the classic policy learning problem, as the respective optimal policies are different with an appropriate choice of $C$. It should be noted that the optimal polices in the two problems are not always different.
    \item \tb{Initial state dependence}: Proposition \ref{lemma: po_state} and \ref{lemma: po_action} indicate that the switch-optimal policy depends both on the initial state and the first action. This behaviour is in sharp contrast to the classic RL setting (Section~\ref{sec: RL}) where  an optimal policy achieves the highest value and Q-function \emph{simultaneously} on all states/state-action pairs. Such different characteristic of the optimal policies in the switching problem calls for a new approach to improve the candidate policy in the policy learning step of any proposed algorithm, as summing up returns over episodes with different initial states will be invalid in this case. 
\end{itemize}
    
As a useful computation tool for policy evaluation (used later in Algorithm~\ref{alg: ac}), we define the net Bellman operator and establish its contractive property.

\begin{definition}[Net Bellman operator]
Given any net Q-function $Q_N \in \mG(\mS \times \mA, \|\cdot\|_{\infty})$ and policy $\pi$, define the net Bellman operator $B^\pi : \mG(\mS \times \mA, \|\cdot\|_{\infty}) \to \mG(\mS \times \mA, \|\cdot\|_{\infty})$ of net Q-function as 
\begin{align*}
    (B^\pi Q_N)(s,a) 
    :=& R(s,a) - (1-\gamma)C(\pi_\text{o},\pi)\\ 
    &+ \gamma \mathbb{E}_{s^\prime \sim P(\cdot|s,a)} [V_N(s^\prime)], \\
    \text{with} \quad V_N(s) =& \mathbb{E}_{a\sim \pi(\cdot|s)} [Q_N(s,a)].
\end{align*}
\end{definition}

\begin{proposition}[Policy evaluation with net Q-function]\label{prop: evaluation}
    Given a net Bellman operator $B^\pi$ with respect to a policy $\pi$, and any net Q-function $Q_N^0 \in \mG(\mS \times \mA, \|\cdot\|_{\infty})$, let 
    \begin{align*}
    Q_N^{k+1}&:=B^\pi( Q_N^k)\,\, \text{  for } k = 0, 1, 2, \ldots
    \end{align*}
    Then $B^\pi$ is a contraction with parameter $c_{B^\pi}=\gamma$ and
    \begin{align*}
        \lim_{k\to\infty}Q_N^{k} = Q_N^\pi,
    \end{align*}
    where $Q_N^\pi \in \mG(\mS \times \mA, \|\cdot\|_{\infty})$ is the net Q-function of $\pi$.
\end{proposition}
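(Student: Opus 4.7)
The plan is to apply the Banach fixed-point theorem on the complete metric space $(\mG(\mS \times \mA, \|\cdot\|_{\infty}), \|\cdot\|_\infty)$, which has already been noted to be complete in Section~\ref{sec: notations}. I would organize the argument in three steps: (i) verify that $B^\pi$ maps $\mG(\mS\times\mA, \|\cdot\|_\infty)$ into itself, (ii) establish the $\gamma$-contraction bound, and (iii) identify the unique fixed point as $Q_N^\pi$. Step (i) is immediate once one observes that $R$ is bounded (stochastic reward has bounded support), $C(\pi_\t{o},\pi_\t{n})$ is a fixed scalar, and taking conditional expectations w.r.t.\ $\pi$ and $P$ preserves boundedness by $\|\cdot\|_\infty$.

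The central computation is step (ii). Given any two $Q_N^{(1)}, Q_N^{(2)}\in\mG(\mS\times\mA, \|\cdot\|_\infty)$, with associated $V_N^{(i)}(s)=\mathbb{E}_{a\sim\pi(\cdot|s)}[Q_N^{(i)}(s,a)]$, form the difference $(B^\pi Q_N^{(1)})(s,a)-(B^\pi Q_N^{(2)})(s,a)$. The deterministic terms $R(s,a)-(1-\gamma)C(\pi_\t{o},\pi_\t{n})$ cancel identically, leaving only $\gamma\,\mathbb{E}_{s^\prime\sim P(\cdot|s,a)}[V_N^{(1)}(s^\prime)-V_N^{(2)}(s^\prime)]$. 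Applying the triangle inequality, Jensen's inequality for the inner expectation over $a$, and bounding pointwise differences by $\|Q_N^{(1)}-Q_N^{(2)}\|_\infty$, one obtains $\|B^\pi Q_N^{(1)}-B^\pi Q_N^{(2)}\|_\infty\leq \gamma \|Q_N^{(1)}-Q_N^{(2)}\|_\infty$, which is the desired contraction with $c_{B^\pi}=\gamma\in[0,1)$. The Banach fixed-point theorem then yields a unique fixed point to which the iterates $\{Q_N^k\}$ converge from any starting $Q_N^0$.

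Step (iii) amounts to checking that $Q_N^\pi$ is this fixed point. Substituting $Q_N^\pi(s,a)=Q^\pi(s,a)-C(\pi_\t{o},\pi_\t{n})$ gives $V_N^\pi(s^\prime)=V^\pi(s^\prime)-C(\pi_\t{o},\pi_\t{n})$, so
\begin{align}
    (B^\pi Q_N^\pi)(s,a) = R(s,a)-(1-\gamma)C(\pi_\t{o},\pi_\t{n})+\gamma\mathbb{E}_{s^\prime\sim P(\cdot|s,a)}[V^\pi(s^\prime)]-\gamma C(\pi_\t{o},\pi_\t{n}),
\end{align}
and combining the two $C$-terms gives $R(s,a)+\gamma\mathbb{E}_{s^\prime}[V^\pi(s^\prime)]-C(\pi_\t{o},\pi_\t{n})$. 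The classical Bellman equation $Q^\pi(s,a)=R(s,a)+\gamma\mathbb{E}_{s^\prime}[V^\pi(s^\prime)]$ then reduces the r.h.s.\ to $Q^\pi(s,a)-C(\pi_\t{o},\pi_\t{n})=Q_N^\pi(s,a)$, establishing the fixed-point identity. Uniqueness of the fixed point from step (ii) then yields $\lim_{k\to\infty}Q_N^k=Q_N^\pi$.

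The main obstacle I anticipate is purely a bookkeeping one in step (iii): the factor $(1-\gamma)$ in front of $C(\pi_\t{o},\pi_\t{n})$ in the definition of $B^\pi$ is designed so that the total cost contribution telescopes to exactly one $C(\pi_\t{o},\pi_\t{n})$ deduction in the fixed point (rather than the geometric sum $\sum_{t\geq 0}\gamma^t C$ which would arise without this scaling). Tracking this coefficient carefully is what links the novel net Bellman operator back to the classical Bellman identity for $Q^\pi$; the contraction step itself is structurally identical to the standard policy-evaluation argument because the cost terms, being independent of the state-action pair, cancel in the difference.
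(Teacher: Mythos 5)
Your proposal is correct and follows essentially the same route as the paper: establish the $\gamma$-contraction of $B^\pi$ on the complete space $\mG(\mS\times\mA,\|\cdot\|_\infty)$ (the constant terms cancel in the difference) and invoke the Banach fixed-point theorem. In fact your step (iii) --- explicitly substituting $Q_N^\pi = Q^\pi - C(\pi_\t{o},\pi_\t{n})$ into $B^\pi$ and tracking how the $(1-\gamma)$ coefficient combines with the discounted cost to recover the classical Bellman identity --- is more careful than the paper's own argument, which simply asserts that the unique fixed point equals $Q_N^\pi$ without verifying that $Q_N^\pi$ satisfies the net Bellman equation.
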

Thanks to Proposition~\ref{prop: evaluation}, one can use the net Bellman operator to evaluate a given policy $\pi$ starting from an arbitrary net-Q function $Q_N^0$. In this work, we represent net Q-functions by neural networks, replace all expectations with sampled data and tune the parameters so that the net Bellman backup error $\|(B^\pi Q_N)-Q_N\|_2$ is small enough.

\subsection{The Family of Cost Functions} \label{sec: cost}
In this section, we first introduce two different components in the cost when switching from an old policy to a new one. Then we propose a general cost function family, which includes the reviewed local and global switching costs as specific cases. 
Finally we gradually zoom in to one specific choice of switching cost relying on optimal transport, which we also investigate numerically (Section~\ref{sec:numerical-experiments}).

\tb{Two components of switching cost.} In various policy switching problems, the induced switching costs come from two different sources: \emph{learning cost} and \emph{transaction cost}. Learning cost is incurred when the new policy introduces unfamiliar jobs, which requires serious effort to absorb. Meanwhile, transaction corresponds to the adjustment cost on existing familiar jobs. Such separation of costs have been a longstanding subject of analysis in economics \cite{nilssen1992two}. 
Taking traffic control as an example, one wants to update the control policies of some intersection to see if vehicles can pass the intersection more efficiently. Such switch can involve the temporary congestion due to configuring new equipments, which is the learning cost. While there is also influence to the traffic due to updating the software of the existing facilities or maintenance of the existing devices, which is the transaction cost.
These analogues are
reflected in the following cost family. 


\tb{General cost family.} We define a cost family 
\begin{align}
    C(\pi_{\t{o}}, \pi_{\t{n}}) \hspace{-0.1cm} := \sigma\Bigl(\int_{\mS} f({s}) &F(\pi_{\t{o}}(\cdot|s), \pi_{\t{n}}(\cdot|s)) \, \t{d}\mu({s})\Bigr), \label{def: AggregateCost}\\
    F(\pi_{\t{o}}(\cdot|s), \pi_{\t{n}}(\cdot|s)) :=& c_l\mL(\pi_{\t{o}}(\cdot|s), \pi_{\t{n}}(\cdot|s))\nonumber\\
    &+c_t\mT(\pi_{\t{o}}(\cdot|s), \pi_{\t{n}}(\cdot|s)), \label{def: AggregateCost:F}
\end{align}
with $\mL, \mT: \mP(\mA) \times \mP(\mA) \rightarrow \R$ capturing the learning cost and the transaction cost, with weights $c_l, c_t\in \R$, $f: \mS \to \R$ measurable function representing the relative importance weighting of different states, $\mu$ a probability measure on $\mS$, and activation function $\sigma: \R \to \R$; see Fig.~\ref{fig: Costfamily} for an illustration with finite state spaces ($|\mS|<\infty$). 

\begin{figure}
\centering
\begin{tikzpicture}[scale = 0.9]
\begin{axis}[
    no markers, 
    domain=-3:3, 
    samples=100,
    axis lines*=left, 
    xlabel={},
    ylabel={},
    every axis y label/.style={at=(current axis.above origin),anchor=south},
    every axis x label/.style={at=(current axis.right of origin),anchor=west},
    height=2.5cm, 
    width=8cm,
    xtick=\empty, 
    ytick=\empty,
    enlargelimits=false, 
    clip=false, 
    axis on top,
    grid = major,
    axis line style={draw=none}, 
    after end axis/.code={
        \draw (rel axis cs:0,0) -- (rel axis cs:1,0);
    }
    ]
    
    \addplot [very thick, red] {exp(-((x+0.7)^2))/sqrt(pi)}; 
    \addplot [very thick, blue] {exp(-((x-0.7)^2))/sqrt(pi)};
    
    \addplot [draw=none, fill=magenta!20] {exp(-((x+0.7)^2))/sqrt(pi)} \closedcycle;
    
    \addplot [draw=none, fill=green!30] {exp(-((x-0.7)^2))/sqrt(pi)} \closedcycle;

    \addplot [draw=none, fill=cyan!30, domain=-1.5:0] {min(exp(-((x+0.7)^2))/sqrt(pi), exp(-((x-0.7)^2))/sqrt(pi))} \closedcycle;

    \addplot [draw=none, fill=orange!70, domain=0:1.5] {min(exp(-((x+0.7)^2))/sqrt(pi), exp(-((x-0.7)^2))/sqrt(pi))} \closedcycle;
    
    \node at (axis cs:2,0.5) {$\pi_{\mathrm{n}}(\cdot | s)$};
    \node at (axis cs:-2,0.5) {$\pi_{\mathrm{o}}(\cdot |s)$};

    \draw[teal, ->, line width=1pt] (-0.7,0.4) node[anchor=north, xshift=5mm, yshift=8mm, font=\tiny] {Learning cost} to[out=60, in=120] (0.7,0.4);

    \draw[teal, <->, line width=1pt] (-0.9,0) node[anchor=north, xshift=0mm, yshift=-1mm, font=\tiny] {Transaction cost} to[out=-60, in=-120] (-0.2,0);

    \draw[teal, <->, line width=1pt] (0.2,0) node[anchor=north, xshift=7mm, yshift=-1mm, font=\tiny] {Transaction cost} to[out=-60, in=-120] (0.9,0);

    \addplot [thick, blue, domain=-1.5:0] {min(exp(-((x+0.7)^2))/sqrt(pi), exp(-((x-0.7)^2))/sqrt(pi))};

    \addplot [thick, red, domain=0:1.5] {min(exp(-((x+0.7)^2))/sqrt(pi), exp(-((x-0.7)^2))/sqrt(pi))};

    \draw [decorate,decoration={brace,amplitude=10pt,mirror,raise=4pt},yshift=-5pt]
    (axis cs:-3,0) -- (axis cs:0,0) node [black,midway,yshift=-0.7cm] {$\mA_1$};

    \draw [decorate,decoration={brace,amplitude=10pt,mirror,raise=4pt},yshift=-5pt]
    (axis cs:0,0) -- (axis cs:3,0) node [black,midway,yshift=-0.7cm] {$\mA_2$};
\end{axis}
\end{tikzpicture}
\caption{Transport switching cost.}\label{fig: TransportSWC}
\end{figure}

\begin{table*}
\caption{Choices of functions and parameters in the switching cost family.}
\label{tab: CoveredSC}
\begin{center}
\begin{tabular}{@{}l@{\hspace{0.2cm}}l@{\hspace{0.1cm}}cp{3.5cm}@{\hspace{0.1cm}}c@{\hspace{0.2cm}}c@{\hspace{0.2cm}}c@{\hspace{0.2cm}}c@{}}
\toprule
Cost & $\sigma(x)$ & $\mL(\pi_{\t{o}}(\cdot|s), \pi_{\t{n}}(\cdot|s))$& \multicolumn{1}{c}{$\mT(\pi_{\t{o}}(\cdot|s), \pi_{\t{n}}(\cdot|s))$}   & $f$ & $c_l$& $c_t$  & $\mu$  \\ 
\midrule 
   Local & $ |\mS|x$ & $ I_{\{\pi_{\t{o}}(\cdot|s) \not= \pi_{\t{n}}(\cdot|s)\}}$& \multicolumn{1}{c}{$0$}  & $1$ & $1$& $\R$ &  $\mathrm{Unif}(\mS)$\\
   Global & $ I_{\Rp}(x)$ &  $I_{\{\pi_{\t{o}}(\cdot|s) \not= \pi_{\t{n}}(\cdot|s)\}}$ & \multicolumn{1}{c}{$0$}  & $1$ & $1$& $\R$ & $\mathrm{Unif}(\mS)$\\
   \multirow{2}{*}{Transport}& \multirow{2}{*}{$\sigma(x)$} & \multirow{2}{*}{$|\pi_{\t{o}}(\mA_1|s) - \pi_{\t{n}}(\mA_1|s)|$} &  $\hspace{5pt} \pi_{\t{o}}(\mA_1|s) \wedge \pi_{\t{n}}(\mA_1|s)$ \newline $+  \pi_{\t{o}}(\mA_2|s) \wedge \pi_{\t{n}}(\mA_2|s)$  & \multirow{2}{*}{$f$} & \multirow{2}{*}{$\R$} & \multirow{2}{*}{$\R$}& \multirow{2}{*}{$\mu$}
   \\
\bottomrule
\end{tabular}
\end{center}
\end{table*}

The family \eqref{def: AggregateCost} subsumes various switching costs including the local and global ones with finite state space $|\mS|$ [as in \eqref{eq:global-cost} \& \eqref{eq:local-and-global-cost}]\footnote{For finite state space, one can choose $\mu$ to be the uniform distribution on $\mS$ and get back \eqref{eq:global-cost} and \eqref{eq:local-and-global-cost}.}; see  
Table~\ref{tab: CoveredSC}. This specialization also reveals that local/global costs can only measure the learning cost with simple indicator functions, thus are unable to capture the two different sources (learning and transaction) of the cost. 

\tb{Proposed transport switching cost.} We design  a decomposition of the state-wise cost $F$ specified in \eqref{def: AggregateCost:F} into the sum of learning cost $\mL$ and transaction cost $\mT$, relying on optimal transport (hence the name). We restrict our attention to a specific case of a more general construction (see the end of this section, and  Section~\ref{apx: GeneralTSC} for further details) (i) to keep the presentation simple, (ii) as it already conveys the key ideas, (iii) this specialization is easy-to-implement and already turns out to be beneficial as demonstrated by our numerical experiments on multiple RL benchmarks  (Section~\ref{sec:numerical-experiments}). 

In various decision problems the action space has a natural partition $\mA= \cup_{i = 1}^L \mA_i$, like the different skill sets in a department of a company. For easier understanding, we focus on the case of $L = 2$; see Fig.~\ref{fig: TransportSWC} for an illustration with colors indicating the different cost terms defined below. 

The construction consists of $2$ steps:

\tb{Step 1: Mass moving.} We move mass \emph{across} $\mA_1$ and $\mA_2$ such that the mass in each component $\mA_i$ agree. The amount needed to be moved is defined as the learning cost
\begin{align}
   \mL\bigl(\pi_{\t{o}}(\cdot|s), \pi_{\t{n}}(\cdot|s)\bigr) = {\color{magenta}|\pi_{\t{o}}(\mA_1|s) - \pi_{\t{n}}(\mA_1|s)|}. \label{def: LearningCost} 
\end{align}

\tb{Step 2: Mass rearrangement.} As some mass of $\pi_{\t{o}}(\cdot | s)$ remains in the same respective component during the first step---see the blue and orange areas in Fig.~\ref{fig: TransportSWC}---this part of mass will incur a cost due to rearrangement \emph{within} their own components, which gives rise to the transaction cost:
\begin{align}
    \mT\bigl(\pi_{\t{o}}(\cdot|s), \pi_{\t{n}}(\cdot|s)\bigr) =&~  {\color{cyan}  \pi_{\t{o}}(\mA_1|s) \wedge \pi_{\t{n}}(\mA_1|s)} \nonumber\\
    &+ {\color{orange}  \pi_{\t{o}}(\mA_2|s) \wedge \pi_{\t{n}}(\mA_2|s)}.\label{def: TransactionCost} 
\end{align}
\tb{Connection to OT.} The construction of~\eqref{def: LearningCost} and~\eqref{def: TransactionCost} implicitly defines a near-optimal transport map between $\pi_{\t{o}}(\cdot|s)$ and $\pi_{\t{n}}(\cdot|s)$, and serves as a tight upper bound for the optimal transport in classic OT theories \citep[see e.g.][Lemma 5.1]{staudt2023convergence}.
In addition, the definitions naturally extend to $L > 2$ by treating the mass transportation across components as learning cost, and rearrangement within components as transaction cost (definitions and details in Section \ref{apx: GeneralTSC}). And for any finite $L$ we show, in a wide range of settings, that the proposed cost is optimal.


\begin{proposition}[Optimality of the proposed cost]\label{prop:ot}
Given old policy $\pi_\t{o}$, new policy $\pi_\t{n}$ and any $\{\mA_i\}_{i = 1}^L$ as a partition of $\mA$. 
Let the transport cost (of the OT problem) be $c(x, y) = c_l \sum_{i \neq j} I_{\mA_i \times \mA_j}(x, y) + c_t \sum_{i = 1}^L I_{\mA_i \times \mA_i}(x, y)$ for $x,y\in\mA$, with $c_l \geq c_t$.
Then 
$F(\pi_\t{o}(\cdot|s), \pi_\t{n}(\cdot|s))$ solves the OT problem associated to $c$ for any $s \in \mS$.

\end{proposition}

Due to limited space, formal definitions of the OT problem and its optimality are available in Section~\ref{apx: GeneralTSC}. Note that the only assumption $c_l\geq c_t$ aligns with common scenarios in real life, when the learning cost due to unfamiliar jobs are expected to be higher than the transaction cost due to rearrangement. Such inequality also guides the choices of the $(c_l, c_t)$ pair in our numerical experiments. With Proposition \ref{prop:ot}, our proposed definition not only captures the two sources of costs, but also is (near) optimal, which could be hard to simultaneously achieve by other discrepancies like KL-divergence, total variation distance, or maximum mean discrepancy.


We briefly mention a further generalization of the transport switching cost. As justified by Proposition \ref{prop:ot}, in \eqref{def: LearningCost} and \eqref{def: TransactionCost}, we have implicitly used $I_{\mA_1 \times \mA_2}(a, a')$ as the  similarity measurement of actions. 
We note that one can also employ different measurements like the $L_2$ distance (see Section~\ref{apx: GeneralTSC}).


\section{Net Actor-Critic}
\label{sec: NAC}

In this section, we propose the Net Actor-Critic algorithm (NAC; Algorithm \ref{alg: ac}) to approximate the switch-optimal policy. Note that with known cost function that depends only on policies, actor-critic approach would separate the calculation of induced costs by actor from the conservative Q-function estimation, preventing inaccurate cost computation due to pessimism. At high level, NAC starts from evaluating the old policy, then alternately improves and evaluates the new policy in each iteration, and finally compares the empirical net values of the resulting new policy with the old one for a switching decision.

\begin{algorithm} 
   \caption{Net Actor-Critic (NAC)}\label{alg: ac}
{\bfseries Input:} Offline data $\mathcal{D}$, net Q-function parameters $\{\phi_i\}_{i\in[M]}$, target net Q-function parameters $\{\phi_j^\prime\}_{j\in[M]}$,  policy parameter $\theta$, and learning rates $\rho_{nq}, \rho_\theta, \rho_\text{stb}$.
\begin{algorithmic}[1]
    \STATE Apply Algorithm \ref{alg: eva} to evaluate old policy $\pi_\t{o}$
   \REPEAT
   \STATE Sample a mini-batch $B=\{(s, a, r, s^\prime)\}$ from $\mathcal{D}$
   \STATE Generate $a^\prime\sim\pi_\theta(\cdot|s^\prime)$, compute 
   $y(r, s^\prime)$ by \eqref{comp_tar}
   \STATE Update $Q_{N,\phi_i}$:
   $\phi_i \leftarrow  \phi_i - \rho_{nq} \nabla_{\phi_{i}} J_{Q_N,i}, i\in [M]$
   \STATE Improve policy with gradient ascent:
   $$\theta \leftarrow  \theta + \rho_\theta \nabla_{\theta} \mathbb{E}_{a\sim\pi_\theta(\cdot|s_0)} \left[\min\limits_{i\in[M]} Q_{N,\phi_i}(s_0, a)\right]$$
   \STATE Update $\phi_j^\prime$: $\phi_j^\prime \leftarrow  \rho_\text{stb} \phi_j^\prime + (1-\rho_\text{stb}) \phi_j, j\in[M]$
   \UNTIL{Stopping criterion met}
   \STATE (Optionally) apply Algorithm~\ref{alg: eva} to evaluate the resulting policy $\pi_\theta$
\end{algorithmic}
{\bfseries Output:}  $\pi_\text{out} = w(\pi_\text{o}, \pi_\theta)$
\end{algorithm}

\tb{Step 1: Old policy evaluation.} As a preliminary step, we need to evaluate the value of $\pi_\t{o}$, as a reference for later new policy training.  Since such algorithm is inspired by an offline fitted-Q evaluation \citep{munos2008finite,le2019batch}, sharing similar structure as the evaluation part in Algorithm~\ref{alg: ac}, due to limited space, we defer the presentation of Algorithm \ref{alg: eva} to Section~\ref{sec: details_alg}. 

\tb{Step 2: Off-policy evaluation.} With offline data $\mathcal{D}=\{(s_i, a_i, r_i, s_{i+1})\}_{i=1}^n$,  we first evaluate the net Q-function of the current policy $\pi_{\theta}$. Inspired by the pessimistic evaluation with clipped double Q-learning \citep{hasselt2010double, fujimoto2018addressing}, as well as the practical extension to multiple Q-evaluation \citep{an2021uncertainty}, we train $M$  net Q-functions, in the form of neural networks, in parallel, and take the minimum values to have a conservative estimation of the net Q-function. In addition, we also maintain separate target net Q-networks to improve the stability of evaluation process \citep{lillicrap2015continuous}. In each training iteration, we independently sample a mini-batch $B\subset\mathcal{D}$ instead of using the whole data. Hence, denoting the parameters of net Q-function by $\{\phi_i\}_{i\in[M]}$, that of the target net Q-functions by $\{\phi_i^\prime\}_{i\in[M]}$, and that of the policy by $\theta$, the target function for evaluation, calculated on $\{(s, a, r, s^\prime)\}$ is
\begin{align}
    y(r, s^\prime) := r 
    + \gamma  \min\limits_{i\in[M]}Q_{N,\phi_i^\prime}(s^\prime,a^\prime) 
    - (1-\gamma)&C(\pi_0,\pi_\theta), \nonumber\\
    \textrm{with}~a^\prime\sim \pi_\theta(\cdot|s^\prime)&.\label{comp_tar}
\end{align}
Then for each $i\in[M]$, we  update the parameter values $\phi_i$ using the gradient of
\begin{align*}
    J_{Q_N, i}:= \mathbb{E}_{(s, a, r, s^\prime) \sim B}[Q_{N,\phi_i}(s,a) - y(r,s^\prime)]^2.
\end{align*}
\tb{Step 3: Policy improvement.} We improve the policy by applying stochastic policy gradient ascent with the objective
\begin{align*}
\max\limits_{\theta}\E_{a\sim \pi_\theta(\cdot|s_0)} \left[\min\limits_{i\in[M]}Q_{N,\phi_i}(s_0,a)\right],
\end{align*}
which takes care of the state-dependent optimality, compared to former actor-critic methods.
By alternatively running Step 2 and 3 the policy is expected to improve towards the underlying switch-optimal one. Especially with multiple net Q-functions and the existence of costs, the distribution shift issue commonly observed in offline RL is naturally handled. To save computational efforts and avoid over-fitting, a stopping criterion is applied; for further details the reader is referred to Section~\ref{sec: details_alg}. 
Note that when the search process for the switch-optimal policy finishes, we can optionally further evaluate the found policy $\pi_\theta$ by Algorithm~\ref{alg: eva} to have more accurate offline evaluation.

\tb{Step 4: Final decision.} In the last step, the algorithm decides to switch to $\pi_\theta$ if its net value at $s_0$ exceeds the value of the old policy, and such final decision criterion can be defined through a decision function $w$, where
\begin{align*}
    w(\pi_\t{o}, \pi_\theta) \hspace{-0.05cm}:=\hspace{-0.05cm} I_{\{V^{\pi_\t{o}}(s_0) \geq V_N^{\pi_\theta}(s_0)\}}\pi_\text{o}
    \hspace{-0.1cm}
    +\hspace{-0.07cm} I_{\{V^{\pi_\t{o}}(s_0) < V_N^{\pi_\theta}(s_0)\}}\pi_\theta.
\end{align*}

\section{Numerical Experiments}\label{sec:numerical-experiments}

\begin{table*}[t]
\caption{Performance of NAC on Gymnasium benchmarks. 1st column: environment considered. 2nd column: dim($\mS$). 3rd column: dim($\mA$). 4th column: (sub)optimality of the old policy. 5-7th columns: performance measures, for ``Improvement" as mean $\pm$ std. The average net values of old policies are $-14.2$ (Ant-v4), $-60.5$ (HalfCheetah-v4), $15.6$ (Hopper-v4). }
\label{tab: results}
\begin{center}
\begin{tabular}{r@{\hspace{0.2cm}}r@{\hspace{0.1cm}}r@{\hspace{0.2cm}}r@{\hspace{0.2cm}}c@{\hspace{0.2cm}}c@{\hspace{0.2cm}}c}
\toprule
Environment $\mM$ & $d_\mS$ & $d_\mA$ & Old policy $\pi_\t{o}$ & Improvement& Switch proportion  & Responsible rate \\
\midrule 
        Ant-v4 & $27$ & $8$ &suboptimal & 58.2 $\pm$ 23.7 & 100.0\%  & 90.0\% \\
      HalfCheetah-v4 & 17 & 6 & suboptimal & 18.5 $\pm$ 19.5 & 80.0\%  & 70.0\% \\
            Hopper-v4 & 11 & 3 & suboptimal & 27.7 $\pm$ 16.9& 100.0\%   & 100.0\% \\
\midrule              
        Ant-v4                  & $27$ & $8$ & optimal & / & 0.0\%  & 100.0\% \\
       HalfCheetah-v4               & 17 &  6  & optimal & / & 0.0\%  & 100.0\%  \\
             Hopper-v4             & 11 & 3 & optimal & / & 20.0\%   & 80.0\% \\
\bottomrule
\end{tabular}
\end{center}
\end{table*}

\begin{table*}[t]
\caption{Performance of NAC on SUMO-RL benchmarks. 1st column: $c_l$. 2nd column: $c_t$. 3rd column: (sub)optimality of the old policy. 4-6th columns: performance measures, for ``Improvement" as mean $\pm$ std. The average net values of suboptimal old policies are $21.9$, $19.5$, $21.9$. }
\label{tab: results_sumo}
\begin{center}
\begin{tabular}{r@{\hspace{0.2cm}}l@{\hspace{0.2cm}}r@{\hspace{0.2cm}}c@{\hspace{0.2cm}}c@{\hspace{0.2cm}}c}
\toprule
$c_l$ & $c_t$ & Old policy $\pi_\t{o}$ & Improvement& Switch proportion  & Responsible rate \\
\midrule 
$0.5$ & $0.01$ &suboptimal &  $7.0$ $\pm$ $0.7$ & 100.0\%  & 100.0\% \\
$5.0$ & $0.1$ & suboptimal &  $6.2$ $\pm$ $0.4$ & 100.0\%  & 100.0\% \\
$10.0$ & $1.0$ & suboptimal &  $6.4$ $\pm$ $0.7$ & 100.0\%   & 100.0\% \\
\midrule              
$0.5$ & $0.01$ & optimal & / & 0.0\%  & 100.0\% \\
$5.0$ &  $0.1$  & optimal & / & 0.0\%  & 100.0\%  \\
$10.0$ & $1.0$ & optimal & / & 0.0\%   & 100.0\% \\
\bottomrule
\end{tabular}
\end{center}
\end{table*}

In this section we demonstrate the efficiency of the proposed NAC algorithm on various Gymnasium benchmarks \citep{towers_gymnasium_2023} for robot control and SUMO-RL \citep{sumorl} for traffic control.\footnote{All the code replicating our experiments is available at \url{https://github.com/xiaobaobaochifan/NAC}.} The experiments were designed to answer the following \tb{two questions} (in line with Section~\ref{sec: switchproblem}):
\begin{itemize}[labelindent=0em,leftmargin=2em,topsep=0cm,partopsep=0cm,parsep=0cm,itemsep=1mm]
    \item[Q1:] If the old policy $\pi_\t{o}$ is highly suboptimal in terms of its net value, can NAC find a new policy $\pi_\t{n}$ to improve it (in terms of net value)?
    \item[Q2:] When the old policy $\pi_\t{o}$ is already switch-optimal, will NAC advise the agent not to switch?
\end{itemize}
\textbf{Gymnasium.} We selected three environments of Gymnasium (version 0.29.1) to test these hypotheses and the performance of NAC: Ant-v4, HalfCheetah-v4 and Hopper-v4. Common characteristics of the environments are that 
\begin{itemize}[labelindent=0em,leftmargin=0.9em,topsep=0cm,partopsep=0cm,parsep=0cm,itemsep=1mm]
    \item their state and action spaces are continuous ($\mS \subseteq \R^{d_\mS}$, $\mA \subseteq \R^{d_\mA}$),
    \item the environments are challenging (due to their large dimensional state/action spaces; see Table~\ref{tab: results}),
    \item the aim of different 3D robots as agents in the environments is to fast move forward and remain healthy.
\end{itemize}
To simulate an already switch-optimal old policy $\pi_\t{o}$ (to Q2), we relied on the online version of the NAC algorithm. To obtain a highly sub-optimal old policy $\pi_\t{o}$ (to Q1), we initialized $\pi_\t{o}$ randomly for the HalfCheetah-v4 and the Hopper-v4 environment. For Ant-v4, most random policies were so weak that the agent could hardly learn anything useful from it, not to say improve. So we instead used a policy $\pi_\t{o}$ that was trained online for a few steps; this ensured that the agent could receive some positive rewards but $\pi_\t{o}$ was still far from optimal. For each environment and question (Q1 and Q2), we performed $10$ Monte Carlo experiments to assess the performance of NAC. In our experiments, we set $c_l=5$ and $c_t=0$ in the cost\footnote{The $c_t=0$ choice was made as it is the simplest setting which already goes beyond the local/global switching costs. Due to limited space, further results on $c_t\in\{0.1, 1\}$ are in Section \ref{sec: furtherexp}.}, and all the hyperparameters of the algorithms and parameters of the cost are provided in Section~\ref{sec: details_alg}. Note that we discard implementing the former methods with global/local costs, because 1) they are only for online setting; 2) those costs can only handle finite state space but we deal with more complex continuous spaces; 3) even in finite space case, any switch will lead to constant costs, which provides no information for policy learning.

Our performance measures reported (Table~\ref{tab: results}, with additional ablation study in Section~\ref{sec: furtherexp}) were as follows. With optimal old policies (Q2), we counted the proportion of repetitions over all random seeds when the algorithm advised the agent to switch; the perfect value is $0\%$. For suboptimal old policies (Q1), we calculated the same proportion (but the perfect value is $100\%$ instead). Such ratio is reported in the column with label ``Switch proportion". For all suboptimal cases, we report the mean $\pm$ std of the improvement in net value, with label ``Improvement". In addition, we also considered the performance measure ``Responsible rate". Recall that the NAC approach makes its decision by comparing the offline-estimated values of the old and newly-obtained policies. We also evaluated the two compared policies ($\pi_\t{o}$ and $\pi_\t{n}$) in an online fashion, providing a more accurate ``ground truth". 
The performance measure ``Responsible rate" counts the proportion that the decision made by NAC agrees with the one provided by the online evaluator. 
 
Table~\ref{tab: results} shows that for suboptimal old policies (Q1), in all environments NAC could significantly improve the net values (by relying on the offline data generated by such weak policies); the highest increase was $58.2$ in Ant-v4, noting that the average net value of old policy was only $-14.2$. In terms of switch decisions, in at least $80\%$ of the cases NAC advised the agent to switch to a new policy $\pi_\t{n}$; these results show that NAC encouraged the agent to explore better policies with high probability. For already optimal old policies (Q2), only in Hopper-v4 there were as low as $20\%$ of the cases where the algorithm advised to switch, while in other environments the decision was always to stick with the old one. Such high probabilities to keep old policies made sure that the agent did not switch to a less profitable policy. We can see that NAC provides responsible decisions in most cases: if due to randomness, the learned policy is not good enough, NAC will likely advise not to switch. 

\textbf{Traffic data.} To further showcase the applicability of NAC, we implemented the algorithm on the data from SUMO-RL \citep{sumorl}, an environment designed for developing and assessing traffic control algorithms in realistic urban scenarios.  The specific scenario concerned is an intersection with stochastically arriving vehicles. By observing the current light conditions, densities of vehicles and the numbers of queued ones, the agent can adaptively change the phases of lights in each direction to increase the speed for vehicles to pass and ease the pressure of the intersection. The instant reward is the efficiency of vehicles to cross the intersection. Switching to a new policy negatively influences such efficiency by delays, which is the cost.

According to Table \ref{tab: results_sumo}, the efficiency of vehicles near the intersection has been significantly increased, showcasing the effectiveness of our proposed method. Particularly, starting with sub-optimal old policies with net-value $\sim 20$, the improvement on average is $6.2-7.0$ (with std $0.4-0.7$). In addition, the switch proportion is $100\%$ when starting from sub-optimal old policies and $0\%$ when starting from an optimal one. In all cases, the responsible rate is $100\%$.

Further discussions of experiment settings, implementations and limitations are available in Section~\ref{sec: details_alg} and \ref{sec: furtherexp}.

\section*{Impact Statement}

We do not see any direct negative societal impact arising from the
proposed problem formulation of policy switching or the Net Actor-Critic algorithm.





\bibliography{main}
\bibliographystyle{icml2025}

\newpage
\appendix
\onecolumn

\setcounter{equation}{0}
\setcounter{figure}{0}
\renewcommand{\theequation}{\thesection\arabic{equation}}
\renewcommand{\thefigure}{\thesection\arabic{figure}}

\begin{center}
\Large \tb{Appendix}
\end{center}
\vspace{0.3cm}

In Section~\ref{apx: GeneralTSC}, we elaborate the general optimal transport based switching cost, which we specialized in the main body of the paper. Algorithmic details are provided in Section~\ref{sec: details_alg}. Additional experimental results are given in Section~\ref{sec: furtherexp}. Section~\ref{sec: proofs} is dedicated to proofs.

\section{General Formula for Transport Switching Cost}\label{apx: GeneralTSC}



In this section, we aim to provide generalizations for the transport switching cost in two directions: a) employ general measurements of similarity between two actions instead of indicator functions; b) consider the partition with multiple components, i.e. $L> 2$. The construction is inspired by a technique that are widely used to obtain the convergence rate of empirical Wasserstein distance \citep{fournier2015rate, weed2019sharp, lei2020convergence, staudt2023convergence}. 

\subsection{The Classic OT Theory}
Before delving into the details of the cost construction, we introduce two concepts from the optimal transport theory.   

\begin{definition}[Feasible transport plan]\label{def:feasible}
    Given any measure spaces $(X, \Sigma_X, \mu)$ and $(Y, \Sigma_Y, \nu)$. Then for any measure $\sigma$ on $(X \times Y, \Sigma_X \otimes \Sigma_Y)$, we say $\sigma$ is a feasible transport plan between $\mu$ and $\nu$ if for any $A \in \Sigma_X$ and any $B \in \Sigma_Y$ we have 
    \begin{align*}
        \sigma(A \times Y) = \mu(A), \quad \text{and} \quad \sigma(X \times B) = \nu(B),
    \end{align*}
and we write as $\sigma \in \mC(\mu, \nu)$.
\end{definition}

\begin{definition}[Optimal transport plan]
Given measure spaces $(X, \Sigma_X, \mu)$ and $(Y, \Sigma_Y, \nu)$ and any nonnegative measurable function $c: X \times Y \to \Rn$ satisfies some continuity conditions \citep[see e.g.][Theorem 4.1]{villani2009optimal}. Then we say $\sigma^* \in \mC(\mu, \nu)$ is an optimal transport plan if and only if 
\begin{align*}
    \sigma^* \in \argmin_{\sigma \in \mC(\mu, \nu)}\Biggl\{\int_{X \times Y} c(x, y) \, \mathrm{d} \sigma(x, y)\Biggr\}.
\end{align*}
\end{definition}

\subsection{The Generalized Costs}
Now we are ready to introduce our construction. For every practical problem, the action space could be naturally divided into several groups, which then forms a partition of $\mA$, denoted by $\{\mA_\ell\}_{\ell = 1}^L$. Therefore, for each fixed state $s$, when switching from $\pi_\t{o}(\cdot|s)$ to $\pi_\t{n}(\cdot|s)$, the learning cost is to consider the probability mass that is transported \emph{between} different components of $\{\mA_\ell\}_{\ell = 1}^L$. While the transaction cost  focuses on the probability mass that moves \textit{within} each component of $\{\mA_\ell\}_{\ell = 1}^L$. We elaborate the intuition in the coming paragraphs

\textbf{Learning cost.}
For any $s \in \mS$, let $a^s_\ell := \pi_\t{o}(\mA_\ell|s)$, $b^s_\ell := \pi_\t{n}(\mA_\ell|s)$, then we immediately have the following decomposition
\begin{align*}
    \pi_\t{o}(\cdot|s) = \sum_{\ell=1}^L a^s_\ell \pi_{\t{o}, \ell}(\cdot|s) \quad \text{and } \quad \pi_\t{n}(\cdot|s) = \sum_{\ell=1}^L b^{s}_\ell \pi_{\t{n}, \ell}(\cdot|s) 
\end{align*}
where $\pi_{\t{o}, \ell}(\cdot|s) := \pi_\t{o}(\cdot|s) I_{\mA_\ell}/a^{s}_{\ell}$ and $\pi_{\t{n}, \ell}(\cdot|s) := \pi_\t{n}(\cdot|s) I_{\mA_\ell}/b^{s}_{\ell}$ are conditional distributions on $\mA_\ell$. Then if $a^{s}_\ell \not= b^{s}_\ell$, we need to transport $|a^{s}_\ell - b^{s}_\ell|$ amount of mass in or out of $\mA_\ell$, which is captured by the following two measures on $\mA$: 
\begin{align}
    \rho^{s} := \sum_{\ell = 1}^L (a^{s}_\ell - b^{s}_\ell)_+ \pi_{\t{o}, \ell}(\cdot|s) \quad \text{and} \quad \eta^{s} :=  \sum_{\ell = 1}^L (b^{s}_\ell - a^{s}_\ell)_+ \pi_{\t{n}, \ell}(\cdot|s)  \label{def: surplusmass} ,
\end{align}
with $(a^{s}_\ell - b^{s}_\ell)_+ := (a^{s}_\ell - b^{s}_\ell)I_{\{a^{s}_\ell - b^{s}_\ell \geq 0\}}$ and $(b^{s}_\ell - a^{s}_\ell)_+ := (b^{s}_\ell - a^{s}_\ell)I_{\{b^{s}_\ell - a^{s}_\ell\geq 0\}}$. If we further define $\tau^{s}_\ell:= a^{s}_\ell \wedge b^{s}_\ell$. Then $\rho^{s}$ determines how much mass above $\tau^{s}_\ell$ should be moved from $\pi_o(\cdot|s)$ in each $\mA_\ell$. Similar intuition applies to $\eta^{s}$. Thus, any feasible transport plan between $\rho^s$ and $\eta^s$, i.e. 
\begin{align}
    \gamma^{s} \in \mC(\rho^{s}, \eta^{s}), \label{def: LearningPlan}
\end{align} 
would lead to the first step of transportation between $\pi_\t{o}^{s}$ and $\pi_\t{n}^{s}$, i.e. mass moving (while the second step would be shape matching in each component), and the induced cost during this cross-component transportation models the learning cost. Specifically, we define the learning cost as
\begin{align}
    \mL_{c_1}\bigl(\pi_\t{o}(\cdot|s), \pi_\t{n}(\cdot|s)\bigr):= \int_{\mA \times \mA} c^{s}_1(x, y) \, \mathrm{d}\gamma^{s}(x, y) \label{Def: LearningCost}, 
\end{align}
where $c^{s}_1: \mA \times \mA \to \Rn$ is the cost function measures the similarity/distance between two actions. 

\textbf{Transaction cost.}
The above transport plan $\gamma^{s}$ guarantees that $\pi_\t{o}(\cdot|s)$ has the same amount of mass as $\pi_\t{n}(\cdot|s)$ by moving across different components. Then inside each $\mA_\ell$ with $a^{s}_\ell \leq b^{s}_\ell$, we also need to properly rearrange the mass within $\mA_\ell$ , such that the mass has same distribution as $\pi_\t{n}(\cdot|s)$, as the second step of a plan, and the cost incurred by this within-component rearrangement is transaction cost. Such rearrangement can be described by
\begin{align}
    \lambda^{s} := \sum_{\ell=1}^L \tau_{\ell}^s \lambda^{s}_\ell, \label{def: TransactionMeasure}  
\end{align}
where each $\lambda^{s}_\ell \in \mC\bigl(\pi_{\t{o}, \ell}(\cdot|s), \pi_{\t{n}, \ell}(\cdot|s)\bigr)$. With another cost function $c^{s}_2: \mA \times \mA \to \Rn$, the transaction cost is defined as the induced cost during this within-component rearrangement:
\begin{align}\label{Def: TransactionCost}
    \mT_{c_2}\bigl(\pi_\t{o}(\cdot|s), \pi_\t{n}(\cdot|s)\bigr) &:= \int_{\mA \times \mA} c^{s}_2(x, y) \, \mathrm{d}\lambda^{s}(x, y).
\end{align}
Moreover, the following proposition justifies that the combination of the two steps produces a feasible transportation between $\pi_\t{o}^{s}$ and $\pi_\t{n}^{s}$. 
\begin{proposition}[Feasibility of the proposed transport plan]\label{lemma: FeasibleTransportPlan}
    Given old policy $\pi_\t{o}$ and an candidate new policy $\pi_\t{n}$. For each fixed $s\in \mS$, suppose $\gamma^s$ and $\lambda^s$ are defined as~\eqref{def: LearningPlan} and~\eqref{def: TransactionMeasure}, we have $ \gamma^{s} + \lambda^{s} \in \mC(\pi_\t{o}^{s}, \pi_\t{n}^{s})$.
\end{proposition}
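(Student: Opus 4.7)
The plan is to verify the feasibility condition directly from the definition of $\mathcal{C}(\pi_\t{o}^s, \pi_\t{n}^s)$, by computing the two marginals of the candidate measure $\gamma^s + \lambda^s$ and checking that they coincide with $\pi_\t{o}(\cdot|s)$ and $\pi_\t{n}(\cdot|s)$, respectively. Since the assertion is about marginals and $\gamma^s, \lambda^s$ are built precisely out of marginal-adapted components, no transport machinery beyond bookkeeping is required.

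For the first marginal, fix an arbitrary $A\in \Sigma_\mA$ and expand
\begin{align}
(\gamma^s+\lambda^s)(A\times \mA) = \gamma^s(A\times \mA) + \lambda^s(A\times \mA).
\end{align}
By feasibility of $\gamma^s$ between $\rho^s$ and $\eta^s$ in~\eqref{def: LearningPlan}, the first term equals $\rho^s(A)=\sum_{\ell=1}^L (a^s_\ell-b^s_\ell)_+\,\pi_{\t{o},\ell}(A|s)$. Similarly, by the $\tau^s_\ell$-weighted convex combination in~\eqref{def: TransactionMeasure} and the feasibility of each $\lambda^s_\ell\in \mathcal{C}(\pi_{\t{o},\ell}(\cdot|s),\pi_{\t{n},\ell}(\cdot|s))$, the second term equals $\sum_{\ell=1}^L \tau^s_\ell\,\pi_{\t{o},\ell}(A|s)$. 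Combining these with the elementary identity $(a^s_\ell-b^s_\ell)_+ + (a^s_\ell\wedge b^s_\ell)=a^s_\ell$ (verified separately for $a^s_\ell\geq b^s_\ell$ and $a^s_\ell<b^s_\ell$), one obtains
\begin{align}
(\gamma^s+\lambda^s)(A\times \mA) = \sum_{\ell=1}^L a^s_\ell\,\pi_{\t{o},\ell}(A|s) = \pi_\t{o}(A|s),
\end{align}
where the final equality uses the decomposition $\pi_\t{o}(\cdot|s)=\sum_\ell a^s_\ell\,\pi_{\t{o},\ell}(\cdot|s)$ recorded right before \eqref{def: surplusmass}.

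The second marginal is handled symmetrically: for $B\in \Sigma_\mA$, one has $\gamma^s(\mA\times B)=\eta^s(B)=\sum_\ell (b^s_\ell-a^s_\ell)_+\,\pi_{\t{n},\ell}(B|s)$ and $\lambda^s(\mA\times B)=\sum_\ell \tau^s_\ell\,\pi_{\t{n},\ell}(B|s)$, and the same identity $(b^s_\ell-a^s_\ell)_+ + (a^s_\ell\wedge b^s_\ell)=b^s_\ell$ collapses the sum to $\pi_\t{n}(B|s)$. Together the two computations yield $\gamma^s+\lambda^s\in \mathcal{C}(\pi_\t{o}^s,\pi_\t{n}^s)$. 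The only subtlety to get right is the accounting: $\rho^s$ and $\eta^s$ are built from the conditional subprobabilities $\pi_{\t{o},\ell}(\cdot|s)$ and $\pi_{\t{n},\ell}(\cdot|s)$ rather than from the unnormalized restrictions $\pi_\t{o}(\cdot|s)I_{\mA_\ell}$, so one must carry the normalizing factors $a^s_\ell,b^s_\ell,\tau^s_\ell$ carefully through the algebra; this is where the $(x)_+ + (a\wedge b)$ identity discharges all of the apparent mismatch in a single line, making this step conceptually the only place where anything needs to be checked.
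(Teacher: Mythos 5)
Your proof is correct and follows essentially the same route as the paper's: a direct computation of the two marginals of $\gamma^s+\lambda^s$, using the feasibility of $\gamma^s$ and of each $\lambda^s_\ell$ together with the identity $(a^s_\ell-b^s_\ell)_+ + (a^s_\ell\wedge b^s_\ell)=a^s_\ell$ to collapse the sum to $\pi_\t{o}(\cdot|s)$, and symmetrically for $\pi_\t{n}(\cdot|s)$. If anything, you are slightly more explicit than the paper, which states the identity implicitly and leaves the second marginal to the reader.
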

Proposition \ref{lemma: FeasibleTransportPlan} assures that any feasible transport plan $\gamma^s$ and $\lambda^s$ will lead to a feasible transport plan between $\pi_\t{o}(\cdot|s)$ and $\pi_\t{b}(\cdot|s)$ and lead to a transport switching cost via~\eqref{Def: LearningCost} and~\eqref{Def: TransactionCost}. In fact, it is possible to be more ambitious by choosing $\gamma^s$ and $\lambda^s$ to be the optimal/near-optimal transport plan. In the following proposition, we demonstrate that the formulation of transport switching cost we defined in~\eqref{def: LearningCost} and~\eqref{def: TransactionCost} can be seen as the optimal value of~\eqref{Def: LearningCost} and~\eqref{Def: TransactionCost} for specific choice of cost functions $c_1^s$ and $c_2^s$.

\begin{proposition}[Respective optimality of the cost terms]\label{prop: Recovery}
    Let $L = 2$, i.e.\ $\mA = \mA_1 \cup \mA_2$ and $\mA_1 \cap \mA_2 = \emptyset$. For each fixed $s \in \mS$, we define $c_1^s(x, y) = \sum_{i, j = 1}^2 I_{\mA_i \times \mA_j}(x, y)$ and $c_2^s(x, y) \equiv 1$, for any $(x, y) \in \mA \times \mA$. In this case, we have 
    \begin{align*}
        \eqref{def: LearningCost} &= \min\Bigl\{\int_{\mA \times \mA} c^{s}_1(x, y) \, \mathrm{d}\gamma^{s}(x, y): \gamma^s \in \mC(\rho^s, \eta^s)\Bigr\}, \\ 
        \eqref{def: TransactionCost} &= \sum_{\ell = 1}^2 \tau_\ell^s   \min \Bigl\{\int_{\mA \times \mA} c_2^s(x, y) \, \mathrm{d} \lambda_{\ell}^s(x, y): \lambda^{s}_\ell \in \mC\bigl(\pi_{\t{o}, \ell}(\cdot|s), \pi_{\t{n}, \ell}(\cdot|s)\bigr)\Bigr\}.
    \end{align*}
\end{proposition}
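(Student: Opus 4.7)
The plan is to prove the two equalities separately, observing that in each case the cost function is constant on the (essential) support of every feasible transport plan. This makes the optimal-transport problem degenerate: the infimum is attained trivially and equals the constant value times the total transported mass. Once this is established, the two equalities reduce to bookkeeping about where $\rho^s$, $\eta^s$, and the conditional policies are concentrated.

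For the transaction-cost equality, fix $\ell \in \{1,2\}$ and observe that $\pi_{\t{o}, \ell}(\cdot|s)$ and $\pi_{\t{n}, \ell}(\cdot|s)$ are probability measures, so every coupling $\lambda_\ell^s \in \mC\bigl(\pi_{\t{o}, \ell}(\cdot|s), \pi_{\t{n}, \ell}(\cdot|s)\bigr)$ satisfies $\lambda_\ell^s(\mA \times \mA) = 1$. With $c_2^s \equiv 1$, this immediately gives $\int c_2^s \, \t{d}\lambda_\ell^s = 1$, so the minimum over $\mC$ equals $1$. Weighting by $\tau_\ell^s = \pi_\t{o}(\mA_\ell|s) \wedge \pi_\t{n}(\mA_\ell|s)$ and summing over $\ell \in \{1,2\}$ recovers the right-hand side of \eqref{def: TransactionCost}.

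For the learning-cost equality, the first step is to localize the supports of $\rho^s$ and $\eta^s$. Since $\pi_\t{o}(\cdot|s)$ and $\pi_\t{n}(\cdot|s)$ are probability measures on $\mA = \mA_1 \cup \mA_2$ with $\mA_1 \cap \mA_2 = \emptyset$, we have $a_1^s + a_2^s = b_1^s + b_2^s = 1$, hence $a_1^s - b_1^s = -(a_2^s - b_2^s)$. Thus at most one of $(a_1^s - b_1^s)_+, (a_2^s - b_2^s)_+$ is nonzero, and symmetrically for the differences defining $\eta^s$; without loss of generality (relabelling indices if necessary) $\rho^s$ is concentrated on $\mA_1$ and $\eta^s$ on $\mA_2$, both with total mass $|a_1^s - b_1^s|$. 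By the marginal conditions, any $\gamma^s \in \mC(\rho^s, \eta^s)$ must be supported on $\mA_1 \times \mA_2$, where the stated formula for $c_1^s$ reduces to a single constant. Therefore $\int c_1^s \, \t{d}\gamma^s$ does not depend on the choice of $\gamma^s$; evaluating it against the total mass $|a_1^s - b_1^s|$ and matching constants yields \eqref{def: LearningCost}.

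I expect the only real obstacle to be notational rather than conceptual: one must verify that the marginal conditions really do force the support of $\gamma^s$ into $\mathrm{supp}(\rho^s) \times \mathrm{supp}(\eta^s)$ in the abstract measurable setting, which follows from $\gamma^s(E \times \mA) = \rho^s(E) = 0$ whenever $E$ is disjoint from $\mathrm{supp}(\rho^s)$ (and symmetrically for the second marginal), and one must check that the scaling factor built into $c_1^s$ aligns with the coefficient in \eqref{def: LearningCost}. With those pieces in place the proposition is immediate.
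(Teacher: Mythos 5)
Your argument is essentially the paper's own: both proofs rest on the observation that the cost is constant on the support of every feasible plan, so the transport problem is degenerate and the minimum equals that constant times the total transported mass; the transaction-cost part (total mass $1$ for a coupling of probability measures, weighted by $\tau_\ell^s$) is identical to the paper's, and your support-localization step for $\gamma^s$ via the marginal conditions is the same WLOG reduction the paper makes, argued slightly more carefully. The one item you defer --- ``matching constants'' --- is exactly where care is needed: as literally written, $c_1^s=\frac{1}{2}\sum_{i,j=1}^2 I_{\mA_i\times\mA_j}\equiv\frac{1}{2}$ on all of $\mA\times\mA$ (the four rectangles partition the product space), so a literal evaluation gives $\frac{1}{2}\lvert a_1^s-b_1^s\rvert$ rather than $\lvert a_1^s-b_1^s\rvert$; the equality comes out only if $c_1^s$ is read as the cross-component indicator $I_{(\mA_1\times\mA_2)\cup(\mA_2\times\mA_1)}$ (the paper's own display arrives at $a_1^s-b_1^s$ by summing the mass of $\gamma^s(\mA_1\times\mA_2)$ as seen from each marginal, which amounts to the same normalization). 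So your plan is sound and matches the paper, but you should carry out the deferred constant check explicitly and state which normalization of $c_1^s$ you are using.
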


Before discussing the main theoretical result (i.e. Proposition~\ref{prop:ot}), we need to explicitly emphasize that, although the above generalized costs simultaneously incorporate two directions of extensions compared to \eqref{def: LearningCost} and \eqref{def: TransactionCost}, what Proposition~\ref{prop:ot} is concerned with is only the extension of \eqref{def: LearningCost} and \eqref{def: TransactionCost} by allowing $L>2$, which is formally defined by
\begin{align*}
F(\pi_{\t{o}}(\cdot|s), \pi_{\t{n}}(\cdot|s)) 
&:= c_l\mL(\pi_{\t{o}}(\cdot|s), \pi_{\t{n}}(\cdot|s)) +c_t\mT(\pi_{\t{o}}(\cdot|s), \pi_{\t{n}}(\cdot|s)), \quad \textrm{where}\\
\mL(\pi_{\t{o}}(\cdot|s), \pi_{\t{n}}(\cdot|s)) 
&:= \sum_{\ell = 1}^L (a_\ell^s - b_\ell^s)_+, \quad \textrm{and}\\
\mT(\pi_{\t{o}}(\cdot|s), \pi_{\t{n}}(\cdot|s)) 
&:= \sum_{\ell = 1}^L a_\ell^s \wedge b_\ell^s.
\end{align*}
In such a way, for any $L\geq 2$, $\mL(\pi_{\t{o}}(\cdot|s), \pi_{\t{n}}(\cdot|s)) $ captures the total amount of mass to move across different elements of the partition $\{\mA_\ell\}_{\ell=1}^L$, and $\mT(\pi_{\t{o}}(\cdot|s), \pi_{\t{n}}(\cdot|s)) $ depicts the total amount of mass remaining in each regions.

And for completeness, in the following we can re-state the Proposition~\ref{prop:ot} with formal OT terminologies:

\begin{proposition}[Optimality of the proposed cost]
Given old policy $\pi_\t{o}$, candidate policy $\pi_\t{n}$ and any $\{\mA_\ell\}_{\ell = 1}^L$ as a partition of $\mA$, i.e. $\mA = \cup_{\ell = 1}^L \mA_\ell$ and $\mA_i \cap \mA_j = \emptyset$ for all $i \neq j$. Let $c(x, y) = c_l \sum_{i \neq j} I_{\mA_i \times \mA_j}(x, y) + c_t \sum_{\ell = 1}^L I_{\mA_\ell \times \mA_\ell}(x, y)$ with constants $c_l \geq c_t$. Then for any $s \in \mS$, 
we have
\begin{align*}
F(\pi_\t{o}(\cdot|s), \pi_\t{n}(\cdot|s))
= \min \Bigl\{\int_{\mA \times \mA} c(x, y) \, \mathrm{d}\sigma^s(x, y): \sigma^s \in \mathcal{C}(\pi_\t{o}(\cdot|s), \pi_\t{n}(\cdot|s))\Bigr\}.
\end{align*}
In particular, when $L = 2$, the left-hand side of the above recovers exactly the cost in \eqref{def: AggregateCost:F}, with learning cost and transaction costs chosen as \eqref{def: LearningCost} and \eqref{def: TransactionCost}, respectively. 
\end{proposition}

\subsection{The Graphical Interpretation of the Cost Family} Finally we provide a visual explanation of the cost family defined in \eqref{def: AggregateCost} by the Figure~\ref{fig: Costfamily}.

\begin{figure}
\centering
\begin{tikzpicture}[scale = 0.7,
    node distance=5mm,
    input/.style={fill=cyan!20, minimum size=0.1cm, font = \small},
    operator/.style={fill=pink!60, minimum size=0.1cm},
  ]
    \node[input, inner sep=0.05cm] (i1) at (0,0.3) {$\mL_{s_1}$};
    \node[input, inner sep=0.05cm] (i2) at (1,0.3) {$\mT_{s_1}$};
    \node[] at (2,0.5) {$\ldots$};
    \node[input, inner sep=0.05cm] (i5) at (3,0.3) {$\mL_{s_{\Lambda}}$};
    \node[input, inner sep=0.05cm] (i6) at (4,0.3) {$\mT_{s_{\Lambda}}$};
    
    \node[operator] (h1) at (0.5,1.5) {$\mathrm{Id}$};
    \node[operator] (h3) at (3.5,1.5) {$\mathrm{Id}$};
    \node[operator] (o1) at (2,2.5) {$\sigma$};
    
    
    \node[font = \small] (b1) at (2,3.6) {$C(\pi_\mathrm{o}, \pi_{\mathrm{n}})$};
    
    \foreach \i in {1}
    \foreach \h in {1}
    \draw[teal, ->, line width=0.5pt] (i\i) -- node [text width=0.5cm, midway,above, font = \tiny, xshift = -0.1cm, yshift = -0.2cm]{$c_l$} (h\h);
    
    \foreach \i in {2}
    \foreach \h in {1}
    \draw[teal, ->, line width=0.5pt] (i\i) -- node [text width=0.5cm, midway,above, font = \tiny, xshift = 0.3cm, yshift = -0.2cm]{$c_t$} (h\h);
    
    \foreach \i in {5}
    \foreach \h in {3}
    \draw[teal, ->, line width=0.5pt] (i\i) -- node [text width=0.5cm, midway,above, font = \tiny, xshift = -0.2cm, yshift = -0.2cm]{$c_l$} (h\h);
    
    \foreach \i in {6}
    \foreach \h in {3}
    \draw[teal, ->, line width=0.5pt] (i\i) -- node [text width=0.5cm, midway,above, font = \tiny, xshift = 0.3cm, yshift = -0.2cm]{$c_t$} (h\h);
    
    \foreach \h in {3}
    \foreach \o in {1}
    \draw[teal, ->, line width=0.5pt] (h\h) -- node [text width=0.5cm, midway,above, font = \tiny, xshift = 0.3cm, yshift = -0.1cm]{$f({s}_k)$} (o\o);
    
    \foreach \h in {1}
    \foreach \o in {1}
    \draw[teal, ->, line width=0.5pt] (h\h) -- node [text width=0.5cm, midway,above, font = \tiny, xshift = -0.3cm, yshift = -0.1cm]{$f({s}_1)$} (o\o);
    
    \foreach \o in {1}
    \foreach \b in {1}
    \draw[teal, ->, line width=0.5pt] (o\o) -- (b\b);
    
    \draw [decorate,decoration={brace,amplitude=5pt,raise=4pt},yshift=120pt]
    (-0.4,-4) -- (-0.4,-2.5) node [black,midway,xshift = -0.6cm, yshift=0cm] {$F$};
\end{tikzpicture}
\caption{Proposed cost function family. Here we use the shorthands $\mL_{ s_i}:=\mL(\pi_{\t{o}}(\cdot|s_i), \pi_{\t{n}}(\cdot|s_i))$ and $\mT_{s_i}:=\mT(\pi_{\t{o}}(\cdot|s_i), \pi_{\t{n}}(\cdot|s_i))$.}
\label{fig: Costfamily}
\end{figure}

\section{Further Details on Numerical Experiments}
\label{sec: details_alg}
In this section we provide additional details in the algorithm of NAC as well as implementation techniques in related experiments.

\subsection{Offline Evaluation}
The offline evaluation method used in Algorithm \ref{alg: ac} is presented here as Algorithm \ref{alg: eva}, which is mainly inspired by the widely used offline fitted-Q evaluation \citep{riedmiller2005neural,munos2008finite,le2019batch,ma2023sequential}.

\begin{algorithm}[tb]
   \caption{Offline Net Value Evaluation}\label{alg: eva}
{\bfseries Input:} Offline data $\mathcal{D}$,  initial values in target net Q-function parameters $\{\phi_j^\prime\}_{j\in[M]}$, net Q-function parameters $\{\phi_i\}_{i\in[M]}$, policy $\pi$, learning rates $\rho_{nq}, \rho_\text{stb}$.
\begin{algorithmic}[1]
   \REPEAT
   \STATE Sample a mini-batch $B=\{(s, a, r, s^\prime)\}$ from $\mathcal{D}$
   \STATE Generate $a^\prime\sim\pi(\cdot|s^\prime)$, compute update target by\\
   $y(r, s^\prime) = r - (1-\gamma)C(\pi_0,\pi)\nonumber + \gamma  \min\limits_{j\in[M]}Q_{N,\phi_j^\prime}(s^\prime,a^\prime)$
   \STATE For each $i\in[M]$, update net Q-function $\{Q_{N,\phi_i}\}_{i\in[M]}$ by\\
   $\phi_i \leftarrow  \phi_i - \rho_{nq} \nabla_{\phi_{i}} \sum\limits_{(s, a, r, s^\prime)\in B} [\{y(r, s^\prime) - Q_{N,\phi_i}(s, a)\}^2]$
   \STATE Update target net Q-function by
   $\phi_j^\prime \leftarrow  \rho_\text{stb} \phi_j^\prime + (1-\rho_\text{stb}) \phi_j$
   \UNTIL{Convergence criterion met}
   \STATE $Q_N^{\pi}(s_0,a):= \min\limits_{i\in[M]} Q_{N,\phi_i}(s_0, a)$, for all $a\in\mathcal{A}$
\end{algorithmic}
{\bfseries Output:}  $V_N^{\pi}({s}_0) = \mathbb{E}_{{a}\sim\pi(\cdot|\mathbf{s}_0)}[Q_N^{\pi}({s}_0, {a})]$
\end{algorithm}

\subsection{Stopping of the Algorithms}
For evaluation purpose in Algorithm \ref{alg: eva}, since it is used either for a fine evaluation of the given old policy or the finally found new policy, and such numerical results are directly used for comparisons between such two policies, we need both evaluation process to nearly converge, which only needs the total number of epochs (each epoch contains 1000 evaluation/training steps) to be large.

On the other hand, the case of the policy learning process in Algorithm \ref{alg: ac}, i.e. line 2-8, is more complicated. First, due to offline settings, especially with quite weak old policy as the teacher, the sample distribution of transition tuples in a given offline dataset can be very different from one generated by an optimal policy. If the total number of epochs are too high, not only the later training epochs are possibly not contributing to improving the policy, but also the loss in either net values or net Q-networks may diverge due to over-fitting. Motivated by such observations, we introduce a set of stopping criterion, which contains the following several requirements:

First, we set a threshold named ``epochs\_stop", which is the least number of epochs for policy learning, and we never stop the training before the epoch number reaches ``epochs\_stop". Second, as presented in Algorithm \ref{alg: stop}, we terminate when the current new policy either significantly improves over the old policy or has been even worse for consecutive 2 epochs. This ensures that the NAC training part will be appropriately stopped even when the old policy is optimal or highly suboptimal. All hyper-parameters will be explicitly provided in Section \ref{sec: details_alg}.

\begin{algorithm}[tb]
   \caption{Stopping criterion}\label{alg: stop}
{\bfseries Input:} The list of average estimated net values of new policies in  the last 2 epochs $[v_1, v_2]$. Net value of the old policy $v_o$. Net value increase rate upper bound $\alpha>0$, increase upper bound $b_u>0$, decrease lower bound $b_d>0$, stopping flag $\beta=0$.
\begin{algorithmic}[1]
    \IF{$v_0 > 0$}
        \IF{$v_1 > (1+\alpha)v_0$ and $v_2 > (1+\alpha)v_0$}
            \STATE $\beta=1$
        \ENDIF
    \ELSE
        \IF{$v_1 > 0$ and $v_2 > 0$}
            \STATE $\beta=1$
        \ENDIF
    \ENDIF
    \IF{$v_1 \geq v_0+b_u$ and $v_2 > v_0+b_u$}
            \STATE $\beta=1$
    \ENDIF
    \IF{$v_1 \leq v_0-b_d$ and $v_2 > v_0-b_d$}
            \STATE $\beta=1$
    \ENDIF
\end{algorithmic}
{\bfseries Output:}  Stop the training when $\beta=1$.
\end{algorithm}

\subsection{Training Stability}
As a important universal observation in offline RL, the Q-networks during the training process will be over-optimistic on state-action pairs that do not appear in the offline dataset.
As in half of the cases in experiments, we deal with very weak old policies, some of which even have negative net values. The offline data and the policy cloning effect due to the existence of switching costs will further enhance such over-optimistic behaviour. As a result, to prevent highly volatile updates in each training step, we perform gradient clipping on both the gradient w.r.t. policy parameters and Q-network parameters. Those values are also reported in the next subsection.
\subsection{Hyper-Parameters}
In this subsection we provide hyper-parameters for each distinct experiment settings. Note that the implementation is also inspired by the Spinningup project \citep{SpinningUp2018}.

\begin{table}
\caption{Shared hyper-parameters.}
\label{tab: hyper_shared}
\begin{center}
\begin{tabular}{ll}
\toprule
Parameter  & value   \\
\midrule 
        Number of repetitions & 10 \\
        Offline data size &  1000000 \\
        Batch sample size & 256 \\
        Train from scratch & False\\
        $c_l$ & 5\\
        Random seeds & $\{4, 5, ..., 13\}$ \\
        Number of Monte Carlo state samples for cost function in training & 10\\
        Number of Monte Carlo state samples for cost function in evaluation & 10000\\
        Steps per epoch & 1000\\
        Number of epochs in training & 100\\
        Number of epochs in evaluation & 50\\
        Discount rate $\gamma$ & 0.99\\
        Learning rate & 0.0003\\
        $M$ & 2\\
        Number of Monte Carlo action samples for net value estimate in evaluation & 10000\\
        Maximum length of one episode & 1000\\
        Maximum 2-norm for gradient in Q-networks & 1\\
        Epochs\_stop & 20\\
        Net value increase upper bound $b_u$ & 50\\
        Net value decrease bound $b_d$ & 10\\
        Optimizer & Adam \citep{kingma2014adam}\\
\bottomrule
\end{tabular}
\end{center}
\end{table}

\begin{table}
\caption{Hyper-parameters for Ant-v4, (sub)optimal old policy.}
\label{tab: hyper_ant}
\begin{center}
\begin{tabular}{ll}
\toprule
Parameter  & value   \\
\midrule 
        Maximum 2-norm for gradients in net values & 1 \\
        Number of Monte Carlo action samples for net value estimate in training & 1000\\
        Net value increase rate upper bound $\alpha$ & 1\\
        $c_t$ & $\{0, 1\}$\\
\bottomrule
\end{tabular}
\end{center}
\end{table}

\begin{table}
\caption{Hyper-parameters for HalfCheetah-v4, (sub)optimal old policy.}
\label{tab: hyper_hc}
\begin{center}
\begin{tabular}{ll}
\toprule
Parameter  & value   \\
\midrule 
        Maximum 2-norm for gradients in net values & 5 \\
        Number of Monte Carlo action samples for net value estimate in training & 2000\\
        Net value increase rate upper bound $\alpha$ & 0.15\\
        $c_t$ & $\{0, 0.1\}$\\
\bottomrule
\end{tabular}
\end{center}
\end{table}

\begin{table}
\caption{Hyper-parameters for Hopper-v4, (sub)optimal old policy.}
\label{tab: hyper_hopper}
\begin{center}
\begin{tabular}{ll}
\toprule
Parameter  & value   \\
\midrule 
        Maximum 2-norm for gradients in net values & 1 \\
        Number of Monte Carlo action samples for net value estimate in training & 1000\\
        Net value increase rate upper bound $\alpha$ & 1\\
        $c_t$ & $\{0, 0.1\}$\\
\bottomrule
\end{tabular}
\end{center}
\end{table}

\begin{table}
\caption{Hyper-parameters for SUMO-RL that are different from Gymnasium, (sub)optimal old policy.}
\label{tab: hyper_sumo}
\begin{center}
\begin{tabular}{ll}
\toprule
Parameter  & value   \\
\midrule 
        Environment name & sumo-rl-v0\\
        Intersection type & Simple intersection\\
        Offline data size & 100000\\
        Maximum 2-norm for gradients in net values & None \\
        Maximum 2-norm for gradients in net Q-functions & None \\
        Number of epochs in training & 150\\
        Maximum length of one episode & 100\\
        Steps per epoch & 400\\
        Net value increase rate upper bound $\alpha$ & 0.5\\
        Net value increase upper bound $b_u$ & 5.0\\
        Net value decrease bound $b_d$ & 5.0\\
        $(c_l, c_t)$ & $\{(0.5, 0.01), (5.0, 0.1), (10.0, 1.0)\}$\\
\bottomrule
\end{tabular}
\end{center}
\end{table}

\subsection{Compute Resources}
Our experiments ran on a single Precision 7875 Tower workstation, with AMD Ryzen Threadripper PRO 7945WX CPU (64 MB cache, 12 cores, 24 threads, 4.7GHz to 5.3GHz), NVIDIA RTX 6000 Ada GPU. In the training process, the memory needed was around 7.2GB. The time to get all results was within one week.

\subsection{The SUMO-RL Environment}
As we implement our algorithm in a novel real problem, the traffic control problem, we first introduce the general problem setting, and then describe the concerned technical details of the environment we used in the experiments.

\subsubsection{The Traffic Control Problem}
Efficient traffic signal control is crucial for urban areas, as it directly impacts congestion levels, travel times, fuel consumption, and overall quality of life for residents. Frequent policy switches due to online algorithms can potentially influence traffic safety \cite{han2023leveraging}. In contrast, our approach focuses on training a traffic signal control policy using only offline data, striking a balance between maximizing future expected rewards (speeds of vehicles) and minimizing potential switching cost (such as temporary traffic congestion, update of facilities). To this end, we employ the SUMO-RL environment \cite{sumorl} designed for developing and assessing traffic control algorithms in realistic urban scenarios. 

\subsubsection{Environment Details}
Simulation of Urban MObility (SUMO) is a widely used simulation system of transport and large road networks \cite{lopez2018microscopic}. And it has been adopted to an RL-friendly interface by SUMO-RL \cite{sumorl} for both single agent and multi-agent scenarios. While we refer readers to the documentations of both open-source projects for complete set of details, here we introduce core technical settings of SUMO-RL interface.

\textbf{State.} Given a complex traffic network, there are several/single intersection(s). The agent(s) is allowed to observe the traffic information and adaptively change the traffic lights phases. Especially, the observation, i.e. the state variables, is a long vector with both discrete and continuous entries. The state variables include the phase of the current traffic lights, a binary variable `min\_green' indicating whether minimum green light time has passed, the density of vehicles in each lane of each directions (North, South, East, West, each containing multiple lanes) as well as the respective number of queuing vehicles in each lane.

\textbf{Action.} Actions correspond to distinct traffic light configurations in each intersection, i.e. the ways one can change the lights. And to model the real life cases, the is another constant `yellow\_time' so that after a new action being executed, the light will be turned yellow for such amount of seconds before turning green/red.

\textbf{Reward.} For different goals of a problem setting, one can define different instant rewards, for example, the total delay around intersections (the summation of all queuing times of all nearby vehicles). While in our experiment, we focus on the average speed of all nearby vehicles.

In our implementation, by default we only provide textual output. However, it is ready to generate graphical user interface to visualize results. Further details can be found in the aforementioned documentations and our source code.

\subsection{Limitations}
Throughout the paper we considered a general cost formulation relying on optimal transport (OT). We paid specific attention to costs within this class, specified by \eqref{def: LearningCost} and \eqref{def: TransactionCost}. One can always consider more general costs, but this instantiation of the  costs is theoretically justified, probably the simplest to explain, and already provides a more fine-grained quantification for the switching cost  compared to existing approach (local and global switching cost), conveying the key ideas.

\section{Further Experimental Results}
\label{sec: furtherexp}
In this section we provide additional results in Gymnasium environments, when the coefficient of transaction cost $c_t$ takes nonzero values, followed by ablation study.

\subsection{Additional Results} We implement NAC when $c_t\in\{0.1, 1\}$ to provide comparisons to the above results when $c_t=0$, the results of which are presented in Table \ref{tab: furtherresults}.

\begin{table}
\caption{Additional performance of the NAC algorithm on various Gymnasium benchmarks. 1st column: environment considered. 2nd column: dim($\mS$). 3rd column: dim($\mA$). 4th column: (sub)optimality of the old policy $\pi_\t{o}$. 5-7th columns: performance measures. The performance measures are meant as mean $\pm$ std. The average net values of old policies are $-14.2$ (Ant-v4), $-52.8$ (HalfCheetah-v4), $17.0$ (Hopper-v4).}
\label{tab: furtherresults}
\begin{center}
\begin{tabular}
{r@{\hspace{0.2cm}}r@{\hspace{0.1cm}}r@{\hspace{0.2cm}}r@{\hspace{0.2cm}}r@{\hspace{0.2cm}}c@{\hspace{0.2cm}}c@{\hspace{0.2cm}}c}
\toprule
Environment $\mM$ & $d_\mS$ & $d_\mA$ & Old policy $\pi_\t{o}$ & $c_t$ & Improvement& Switch proportion  & Responsible rate \\
\midrule 
        Ant-v4 & $27$ & $8$ &suboptimal & 1 & 55.2 $\pm$ 23.2 & 90.0\%  & 100.0\% \\
      HalfCheetah-v4 & 17 & 6 & suboptimal & 0.1 & 24.1 $\pm$ 8.9 & 100.0\%  & 100.0\% \\
            Hopper-v4 & 11 & 3 & suboptimal & 0.1 & 52.0 $\pm$ 19.8 & 100.0\%   & 100.0\% \\
\midrule              
        Ant-v4                  & $27$ & $8$ & optimal & 1 & / & 0.0\%  & 100.0\% \\
       HalfCheetah-v4               & 17 &  6  & optimal & 0.1 & / & 0.0\%  & 100.0\%  \\
             Hopper-v4             & 11 & 3 & optimal & 0.1 & / & 0.0\%   & 100.0\%  \\
\bottomrule
\end{tabular}
\end{center}
\end{table}

\subsection{Ablation Study.} Here we mainly want to understand if the scale of cost functions influence the policy learning performance in different environments. As seen in Table \ref{tab: results} and \ref{tab: furtherresults}, we increased $c_t$ from $0$ to $0.1$ or $1$. Especially note that, to guarantee fair comparisons, in each environment, apart from $c_t$, all hyperparameters are kept exactly the same, independent of $c_t$ values or the (sub)optimality of the given old policies, which can be checked according to Table \ref{tab: hyper_ant}, \ref{tab: hyper_hc} and \ref{tab: hyper_hopper}. Finally, by comparing the results environment-wise, we can see that, when increasing $c_t$ by an appropriate value, the performance in both optimal and suboptimal old policy cases are similar or slightly better than when $c_t=0$. Such observation is important, as it shows that NAC training process is robust to different scaling of cost functions, which makes it applicable to different scenarios.


\section{Proofs}\label{sec: proofs}
This section is about our proofs.
\subsection{Proof of Lemma \ref{lemma: existence}}\label{proof: existence}
\begin{proof}
    By Assumption \ref{asmp: compact}, both $\{V^\pi({{s}}_0)\}_{\pi\in\Pi}$ and $\{C(\pi_\text{o}, \pi)\}_{\pi\in\Pi}$ are compact in the topology generated by open sets in $\mathbb{R}$, which then implies that both sets are sequentially compact. Then consider the set of resulting net values $\{V_N^\pi({{s}}_0)\}_{\pi\in\Pi}$. For any sequence $(z_i)_{i \geq 1}\subseteq\{V_N^\pi({{s}}_0)\}_{\pi\in\Pi}$, by definition of net values, we must have sequences $(x_i)_{i\geq1}\subseteq\{V^\pi({{s}}_0)\}_{\pi\in\Pi}$ and $(y_i)_{i\geq 1}\subseteq\{C(\pi_\text{o}, \pi)\}_{\pi\in\Pi}$, such that $z_i = x_i - y_i$ for all $i$ (and specifically, for each given $i$, $x_i, y_i$ corresponds to value and cost of the same policy). Since $\{V^\pi({{s}}_0)\}_{\pi\in\Pi}$ is sequentially compact, there exists a subsequence $(x_{i_j})_{j \geq 1}$ of $(x_i)_{i \geq 1}$ such that for some $x\in \{V^\pi({{s}}_0)\}_{\pi\in\Pi}$, $x_{i_j} \to x$ as $j \to +\infty$. On top of such sequence of indices $(i_j)_{j \geq 1}$, Since $\{C(\pi_\text{o}, \pi)\}_{\pi\in\Pi}$ is sequentially compact, there exists a further subsequence of $(i_j)_{j \geq 1}$, denoted as $(i_{j_k})_{k \geq 1}$ such that $y_{i_{j_k}} \to y$, as $k\to +\infty$, for some $y\in\{C(\pi_\text{o}, \pi)\}_{\pi\in\Pi}$. So we immediately know $(z_{i_{j_k}})_{k \geq 1}$, as a subsequence of $(z_i)_{i \geq 1}$, converges to $x-y$. Then $\{V_N^\pi({{s}}_0)\}_{\pi\in\Pi}$ is sequentially compact, and especially attains its supremum by some policy in $\Pi$.
\end{proof}
\subsection{Proof of Lemma \ref{lemma: nontrivial}}\label{proof: nontrial}
\begin{proof}
For an arbitrarily given MDP, all we need is to construct a counter-example, so that the optimal policy is not switch-optimal. So we just  discuss how to design such a counter-example. Given current behaviour policy $\pi_\text{o}$ and some fixed initial state $\mathbf{s}_0$, let's consider an arbitrary policy $\pi$ and the optimal policy in value function $\pi^\ast$. By definition of optimality, we know $V^{\pi^\ast}({s}_0)\geq V^{\pi}({s}_0)$, and especially we denote the gap by $M$, i.e. $M:=V^{\pi^\ast}({s}_0)-V^{\pi}({s}_0)$. Now as long as in some problem settings, the cost function $C$ is larger in $\pi^\ast$, i.e. $C(\pi_\text{o}, \pi^\ast) > C(\pi_\text{o}, \pi)$, it could then be the case that $V_N^\pi$ dominates that of $\pi^\ast$. To be more specific, whenever $C(\pi_\text{o}, \pi^\ast) - C(\pi_\text{o}, \pi) >M$, we would have $V_N^\pi > V_N^{\pi^\ast}$, making ${\pi^\ast}$ not switch-optimal.
\end{proof}

\subsection{Proof of Proposition \ref{lemma: po_state}}\label{proof: po_state}
\begin{proof}
Given an MDP $\mathcal{M}=(\mathcal{S}, \mathcal{A}, P, R, \gamma)$, with a fixed initial state $\mathbf{s}_0$, let's consider the following example, which is also depicted in Figure \ref{fig: illustrationofproof}. To begin with, let $\mathcal{S}=\{{s}_\alpha, {s}_\beta\}$, and $\mathcal{A}=\{{a}_\text{self}, {s}_\text{alt}\}$. At any state, taking action ${a}_\text{self}$ means trying to stay in the same state, guaranteed by letting $P({s}|{s}, {a}_\text{self})=1$ for any ${s}\in\mathcal{S}$. Meanwhile, we also let $P({s}_\alpha|{s}_\beta, {a}_\text{alt})=P({s}_\beta|{s}_\alpha, {a}_\text{alt})=1$, so that whenever the action ${a}_\text{alt}$ is taken at any state, the environment would transit the agent to the other state. Then we define the rewards as $r({s}_\alpha, {a}_\text{self}) = r({s}_\beta, {a}_\text{alt}) = 1$, while $r({s}_\alpha, {a}_\text{alt}) = r({s}_\beta, {a}_\text{self}) = 0$. Together with the definition of the transition dynamics, it just means that the reward is $1$ if and only if the state to arrive at is ${s}_\alpha$, and vanishes otherwise. Finally, choose $\gamma=0.99$ for simplicity.

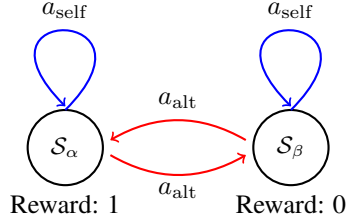
\begin{figure}
\centering
\begin{tikzpicture}[scale=1]
\draw[thick] (0,0) circle (0.5cm);
\draw[thick] (3,0) circle (0.5cm);
\draw[thick,->, red] (0.6,-0.1) to [out = -30, in = -150] (2.4,-0.1);
\draw[thick,->, red] (2.4,0.1) to [out = 150, in = 30] (0.6,0.1);
\node[font = \small] at (0,0) {$\mathcal{S}_\alpha$};
\node[font = \small] at (3,0) {$\mathcal{S}_\beta$};

\draw[thick,->, blue] (0,0.5) to [out=45,in=135,loop,looseness=100] (0,0.55);
\draw[thick,->, blue] (3,0.5) to [out=45,in=135,loop,looseness=100] (3,0.55);

\node[above] at (0,1.6) {$a_{\mathrm{self}}$};
\node[above] at (3,1.6) {$a_{\mathrm{self}}$};
\node[above] at (1.5,0.4) {$a_{\mathrm{alt}}$};
\node[above] at (1.5,-0.8) {$a_{\mathrm{alt}}$};

\node[above] at (0,-1) {Reward: 1};
\node[above] at (3,-1) {Reward: 0};
\end{tikzpicture}
\caption{Illustration of the constructed MDP.}
\label{fig: illustrationofproof}
\end{figure}

With such environment, any policy $\pi$ from the pool of feasible policies $\Pi$ takes the form of $\pi=\{\pi(\cdot|{s}_\alpha), \pi(\cdot|{s}_\beta)\}$. Now let's construct a specific example, where the current behaviour policy is $\pi_\text{o}$, with $\pi_\text{o}({a}|{s})=1/2$ for any $({s}, {a})\in \mathcal{S}\times \mathcal{A}$. We focus on the case when there is no transaction cost. In addition, for any candidate policy $\pi$ to switch to from $\pi_\text{o}$, the learning cost is high whenever in one state $\pi$ is a stochastic policy. And actually let such high learning cost to be $500$. On the other hand, when $\pi$ is always deterministic at any state, the learning cost is low. Especially, if in all states, the action to execute is the same, cost is $25$, while $50$ if actions to take are different in different states.

By the above settings, all the candidate policies (excluding $\pi_\text{o}$) can be divided into 2 groups: $\{\pi_\text{n1}, \pi_\text{n2}, \pi_\text{n3}, \pi_\text{n4}\}$, and $ \Pi\setminus \{\pi_\text{o}, \pi_\text{n1}, \pi_\text{n2}, \pi_\text{n3}, \pi_\text{n4}\}$, where 
$\pi_\text{n1}({a}_\text{self}|{s})=1$ for any ${s}\in\mathcal{S}$; $\pi_\text{n2}({a}_\text{alt}|{s})=1$ for any ${s}\in\mathcal{S}$; $\pi_\text{n3}({a}_\text{self}|{s}_\alpha)=1$ and $\pi_\text{n3}({a}_\text{alt}|{s}_\beta)=1$; $\pi_\text{n4}({a}_\text{alt}|{s}_\alpha)=1$ and $\pi_\text{n4}({a}_\text{self}|{s}_\beta)=1$.

Now let's calculate the values and net values for all policies in the first group. First we consider $\pi_\text{n1}$. If initial state is ${s}_\alpha$, since  one would always take ${a}_\text{self}$, the agent would stay in ${s}_\alpha$, to earn reward $1$ recursively for all rounds, leading to 
\begin{align*}
    V^{\pi_\text{n1}}({s}_\alpha)
    =\sum_{i=0}^\infty 0.99^i \cdot 1
    = \frac{1}{1-0.99}
    =100.
\end{align*}

On the other hand, if ${s}_0 = {s}_\beta$, then the agent would stay in ${s}_\beta$, receiving zero rewards, so that $V^{\pi_\text{n1}}({s}_\beta)=0$. By such way, we can continue to know $V^{\pi_\text{n2}}({s}_\alpha)=49.75, V^{\pi_\text{n2}}({s}_\beta)=50.25$ for $\pi_\text{n2}$; $V^{\pi_\text{n3}}({s}_\alpha)=V^{\pi_\text{n3}}({s}_\beta)=100$ for $\pi_\text{n3}$; and $V^{\pi_\text{n4}}({s}_\alpha)=V^{\pi_\text{n4}}({s}_\beta)=0$ for $\pi_\text{n4}$. Recall there is low costs of switch $25$ for first 2 policies and $50$ for the next 2, we finally know $V_N^{\pi_\text{n1}}({s}_\alpha)=75$, $V_N^{\pi_\text{n1}}({s}_\beta)=-25$ for $\pi_\text{n1}$; $V_N^{\pi_\text{n2}}({s}_\alpha)=24.75, V_N^{\pi_\text{n2}}({s}_\beta)=25.25$ for $\pi_\text{n2}$; $V_N^{\pi_\text{n3}}({s}_\alpha)=V_N^{\pi_\text{n3}}({s}_\beta)=50$ for $\pi_\text{n3}$; and $V_N^{\pi_\text{n4}}({s}_\alpha)=V_N^{\pi_\text{n4}}({s}_\beta)=-50$ for $\pi_\text{n4}$.

The let's consider the second group. Due to the problem settings with horizon $\gamma=0.99$ and the maximal immediate reward of $1$, the highest possible return from any initial state is bounded by $100$, then for all policies in the second group, the net value in any state is bounded by $-400$ due to the high cost, and could never be switch-optimal in any state, given the results in the first group.

For complete comparisons, we could know that for the current policy $\pi_\text{o}$, net values share the same numbers as its values, which are $V^{\pi_\text{o}}({s}_\alpha)=V^{\pi_\text{o}}({s}_\beta)=50$.

By comparing the net values among all policies, we observe that, if ${s}={s}_\alpha$, then the switch-optimal is $\pi_\text{n1}$; while, if ${s}={s}_\beta$, then either the switch-optimal is $\pi_\text{n3}$ or that we just don't make a switch.

\end{proof}

\subsection{Proof of Proposition \ref{lemma: po_action}}\label{apdx: po_action} 
\begin{proof}
The proof would be quite straight-forward if we follow the example settings in the Proof in Appendix \ref{proof: po_state} in the above. As discussed there, we know that $\pi_\text{n1}$ is switch-optimal at initial state ${s}={s}_\alpha$. Then to compute the corresponding net Q-functions, first consider the case $Q_N^{\pi_\text{n1}}({s}_\alpha, {a}_\text{self})$. If ${a}_\text{self}$ is executed at state ${s}_\alpha$, according to the transition dynamics, the agent would remain in such state until termination, leading to $Q^{\pi_\text{n1}}({s}_\alpha, {a}_\text{self})=100$, and then $Q_N^{\pi_\text{n1}}({s}_\alpha, {a}_\text{self})=75$. On the other hand, if ${a}_\text{alt}$ is executed at state ${s}_\alpha$, it would arrive at ${s}_\beta$ and remain there, having $Q^{\pi_\text{n1}}({s}_\alpha, {a}_\text{alt})=0$, and $Q_N^{\pi_\text{n1}}({s}_\alpha, {a}_\text{alt})=-25$. Following the same idea, we can immediately know $Q^{\pi_\text{n3}}({s}_\alpha, {a}_\text{alt})=99$ and $Q_N^{\pi_\text{n3}}({s}_\alpha, {a}_\text{alt})=49>Q_N^{\pi_\text{n1}}({s}_\alpha, {a}_\text{alt})=-25$, showing that $\pi_{\mathrm{n1}}$ is not switch-optimal for every action ${a}\in\mathcal{A}$.
\end{proof}

\subsection{Proof of Proposition \ref{prop: evaluation}}
\begin{proof}
    The proof consists of two parts. First we want to show that such net Bellman operator $B^\pi$ is a contraction map on $\mG(\mS \times \mA, \|\cdot\|_{\infty})$ under $\|\cdot\|_{\infty}$-norm. Then we show such repeated implementations of the contraction lead to the unique evaluation. Without loss of generality, we focus on the proof when state and action spaces are finite, while we note that the proof can be readily extended to the continuous case. 
    
    Now, take arbitrarily two net Q-functions $Q_N^{\pi_1}, Q_N^{\pi_2}\in \mG(\mS \times \mA, \|\cdot\|_{\infty})$, we observe that
    \begin{align*}
        &~\|B^\pi Q_N^{\pi_1} - B^\pi Q_N^{\pi_2}\|_\infty\\
        =&~ \max_{(\b s, \b a) \in \mS \times \mA}|(R(s,  a) - (1-\gamma)C(\pi_0,\pi) + \gamma \E_{{s}' \sim P(\cdot| s,a)} \E_{{a}'\sim\pi(\cdot| {s}')}Q_N^{\pi_1}({s}', {a}'))\\
        &\quad - (R( s, a) - (1-\gamma)C(\pi_0,\pi) + \gamma \E_{{s}' \sim P(\cdot|s, a)} \E_{ a'\sim\pi(\cdot|  s')}Q_N^{\pi_2}( s', a'))|\\
        =&~ \gamma\max_{( s, a) \in \mS \times \mA}| [\E_{ s' \sim P(\cdot| s,a)} \E_{ a'\sim\pi(\cdot|  s')}Q_N^{\pi_1}( s',  a')]
        - [\E_{ s' \sim P(\cdot| s,  a)} \E_{{a}'\sim\pi(\cdot| s')}Q_N^{\pi_2}( s', {a}')]|\\
        =&~ \gamma\max_{(s, a) \in \mS \times \mA}| \E_{{s}' \sim P(\cdot|s,\b a)} \E_{ a'\sim\pi(\cdot|  s')}[Q_N^{\pi_1}( s',  a') - Q_N^{\pi_2}({s}', a')]|\\
        \leq&~ \gamma\max_{({s}, a)\in\mS\times\mA}|Q_N^{\pi_1}( s, a) - Q_N^{\pi_2}(s, a)| = \gamma\|Q_N^{\pi_1} - Q_N^{\pi_2}\|_\infty.
    \end{align*}
    By such we know $B^\pi$ is a contraction mapping, whenever $\gamma\in[0,1)$. After that we review the following theorem.
    \begin{theorem}[Banach Fixed-point Theorem]
        For a non-empty complete metric space $(X,d)$ with contraction $\mathcal{T}:X\to X$, $\mathcal{T}$ has a unique fixed point $\mathbf{x}^\ast\in X$. In addition, starting from arbitrary point $\mathbf{x}_0\in X$, and define a new sequence as $\{\mathbf{x}_n\} = \{\mathcal{T}\mathbf{x}_{n-1}\}$, then we have $\lim_{n\to \infty}\mathbf{x}_n = \mathbf{x}^\ast$.
    \end{theorem}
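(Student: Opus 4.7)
The plan is to verify the two claims in turn: first that $B^\pi$ is a $\gamma$-contraction on $\mG(\mS\times\mA,\|\cdot\|_\infty)$, and second that the unique fixed point produced by iterating $B^\pi$ is exactly $Q_N^\pi$. Convergence then follows from Banach's fixed-point theorem, which applies because $\mG(\mS\times\mA,\|\cdot\|_\infty)$ is complete (as noted in Section~\ref{sec: notations}).

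For the contraction, I would take two arbitrary $Q_N^{(1)},Q_N^{(2)}\in\mG(\mS\times\mA,\|\cdot\|_\infty)$ and form $(B^\pi Q_N^{(1)})(s,a)-(B^\pi Q_N^{(2)})(s,a)$. The terms $R(s,a)$ and $(1-\gamma)C(\pi_\text{o},\pi)$ do not depend on the Q-function and cancel, leaving only $\gamma\,\E_{s'\sim P(\cdot|s,a)}\E_{a'\sim\pi(\cdot|s')}[Q_N^{(1)}(s',a')-Q_N^{(2)}(s',a')]$. Taking absolute values, pushing them inside the expectations by Jensen (or the triangle inequality), and bounding by the pointwise supremum yields $|(B^\pi Q_N^{(1)})(s,a)-(B^\pi Q_N^{(2)})(s,a)|\le \gamma\|Q_N^{(1)}-Q_N^{(2)}\|_\infty$. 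Supremizing over $(s,a)$ gives the contraction constant $c_{B^\pi}=\gamma\in[0,1)$. I should also check that $B^\pi$ indeed maps $\mG(\mS\times\mA,\|\cdot\|_\infty)$ into itself, which follows from the bounded support of the reward, boundedness of $C(\pi_\text{o},\pi)$, and the assumption that $Q_N\in\mG$.

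For the identification of the fixed point, I would verify directly that $B^\pi Q_N^\pi=Q_N^\pi$. First compute $V_N^\pi(s')=\E_{a'\sim\pi(\cdot|s')}[Q_N^\pi(s',a')]=V^\pi(s')-C(\pi_\text{o},\pi)$, since $C$ does not depend on $(s',a')$. Substituting into the definition of $B^\pi$ gives
\begin{align}
(B^\pi Q_N^\pi)(s,a)
&= R(s,a)-(1-\gamma)C(\pi_\text{o},\pi)+\gamma\,\E_{s'\sim P(\cdot|s,a)}\bigl[V^\pi(s')-C(\pi_\text{o},\pi)\bigr] \notag\\
&= \bigl[R(s,a)+\gamma\,\E_{s'\sim P(\cdot|s,a)}V^\pi(s')\bigr]-C(\pi_\text{o},\pi).
\end{align}
The bracket equals $Q^\pi(s,a)$ by the classical Bellman equation for $Q^\pi$, so the right-hand side is exactly $Q^\pi(s,a)-C(\pi_\text{o},\pi)=Q_N^\pi(s,a)$, confirming that $Q_N^\pi$ is a fixed point.

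With contractivity plus completeness, Banach's fixed-point theorem yields both uniqueness of the fixed point and convergence of $Q_N^{k+1}=B^\pi(Q_N^k)$ to that fixed point from any initial $Q_N^0\in\mG(\mS\times\mA,\|\cdot\|_\infty)$; combined with the identification above this gives $\lim_{k\to\infty}Q_N^k=Q_N^\pi$. The only mildly delicate step is the bookkeeping of the $(1-\gamma)$ versus $\gamma$ factors in the cost term when checking the fixed point: the $-(1-\gamma)C$ in the operator is designed precisely so that the $-\gamma C$ contributed by $V_N$ combines with it to produce the single $-C$ appearing in $Q_N^\pi$. Everything else is routine.
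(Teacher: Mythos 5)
The statement you were asked to prove is the Banach fixed-point theorem itself---the classical result that the paper states verbatim (and without proof) inside its proof of Proposition~\ref{prop: evaluation}. Your proposal does not prove this theorem: it explicitly invokes it (``Convergence then follows from Banach's fixed-point theorem, which applies because $\mG(\mS\times\mA,\|\cdot\|_\infty)$ is complete'') and instead proves the surrounding proposition, namely that the net Bellman operator $B^\pi$ is a $\gamma$-contraction whose unique fixed point is $Q_N^\pi$. As a proof of the stated theorem this is a genuine---indeed total---gap: the entire content of the statement (existence, uniqueness, and convergence of iterates for an \emph{arbitrary} contraction $\mathcal{T}$ on an \emph{arbitrary} non-empty complete metric space $(X,d)$) is assumed rather than derived. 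What is missing is the standard argument: show by induction that $d(\mathcal{T}^{n+1}x_0,\mathcal{T}^{n}x_0)\le c_{\mathcal{T}}^{\,n}\,d(\mathcal{T}x_0,x_0)$; conclude from the geometric series bound $d(x_m,x_n)\le \frac{c_{\mathcal{T}}^{\,n}}{1-c_{\mathcal{T}}}\,d(\mathcal{T}x_0,x_0)$ (for $m>n$) that the iterates form a Cauchy sequence; use completeness to obtain a limit $x^\ast$; use continuity of $\mathcal{T}$ (any contraction is Lipschitz) to pass to the limit in $x_{n+1}=\mathcal{T}x_n$ and get $\mathcal{T}x^\ast=x^\ast$; and derive uniqueness from $d(x^\ast,y^\ast)=d(\mathcal{T}x^\ast,\mathcal{T}y^\ast)\le c_{\mathcal{T}}\,d(x^\ast,y^\ast)$, which forces $d(x^\ast,y^\ast)=0$ since $c_{\mathcal{T}}<1$.

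To be fair to what you did write: your argument is essentially a correct proof of Proposition~\ref{prop: evaluation}, and it follows the same route as the paper's proof of that proposition (contraction estimate in $\|\cdot\|_\infty$, appeal to Banach, identification of the fixed point). Your verification that $Q_N^\pi$ is a fixed point---tracking how the $-(1-\gamma)C(\pi_\t{o},\pi)$ in the operator recombines with the $-\gamma C(\pi_\t{o},\pi)$ inherited from $V_N$ to yield the single $-C(\pi_\t{o},\pi)$ in the definition of $Q_N^\pi$---is actually more explicit than the paper's, which asserts the identification via uniqueness with little computation. But a proof of the application cannot stand in for a proof of the fixed-point theorem it relies on. Note that the paper itself treats the Banach theorem as a citation-level fact, so if the goal is a self-contained manuscript, the Cauchy-sequence argument sketched above is precisely what needs to be supplied.
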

    Since the concerned net $Q$-functions are defined based a finite horizon MDP with bounded reward function, they are contained in $\mG(\mS \times \mA, \|\cdot\|_{\infty})$. Given that $\mG(\mS \times \mA, \|\cdot\|_{\infty})$ is complete when metrized by $\|\cdot\|_{\infty}$-norm \citep[see e.g.][pp.121]{folland1999real}, starting with a $Q_N^0 \in \mG(\mS \times \mA, \|\cdot\|_{\infty})$, $Q_N^k$ converges to a fixed point $Q_N^\ast \in \mG(\mS \times \mA, \|\cdot\|_{\infty})$, which, by the definition of fixed point, satisfies the net Bellman equation:
    \begin{align*}
        Q_N^\ast({s}, {a}) = R({s}, {a}) - (1-\gamma)C(\pi_0,\pi) + \gamma \mathbb{E}_{{s}^\prime \sim P(\cdot|{s},{a})} \mathbb{E}_{{a}^\prime\sim\pi(\cdot|, {s}^\prime)}Q_N^\ast({s}^\prime, {a}^\prime)
    \end{align*}
    Still due to the Banach fixed-point theorem, we know the corresponding fixed point $Q_N^\ast$ is unique, which means that $Q_N^\ast = Q_N^\pi$. Then such iterations of backup would converge, i.e. $\lim_{k\to \infty}Q_N^k = Q_N^\pi$.
\end{proof}

\subsection{Proof of Proposition \ref{lemma: FeasibleTransportPlan}}

\begin{proof}
    Writing $\sigma^s:= \gamma^s + \lambda^s$, we only need to show that for any measurable set $G \subseteq \mathcal{A}$, we have $\sigma^s(G \times \mA) = \pi_\t{o}(G|s)$ and $\sigma^s(\mA \times G) = \pi_\t{n}(G|s)$. Recall that $\gamma^s \in \mC(\rho^s, \eta^s)$ and $\lambda^s = \sum_{\ell = 1}^L \tau_\ell^s \lambda_\ell^s$ with each $\lambda_\ell^s \in \mC\bigl(\pi_\t{o}(\cdot|s), \pi_\t{n}(\cdot|s)\bigr)$, thus we have
    \begin{align*}
        \sigma^s(G \times \mathcal{A}) &= \gamma^s(G \times \mathcal{A}) + \sum_{\ell = 1}^L(a_\ell \wedge b_\ell)\lambda_\ell(G \times \mathcal{A})
        = \rho^s(G) + \sum_{\ell = 1}^L (a_\ell \wedge b_\ell) \pi_{o, \ell}(G|s).  
    \end{align*}
    Then we take into the explicit form of $\rho^s$ defined in \eqref{def: surplusmass}, it yields that
    \begin{align*}
        \sigma^s(G \times \mathcal{A}) &= \sum_{\ell = 1}^L (a_\ell - b_\ell)_+ \pi_{o, \ell}(G|s) + \sum_{\ell = 1}^L (a_\ell \wedge b_\ell) \pi_{o, \ell}(G|s) \\  &=  \sum_{\ell = 1}^L a_\ell \pi_{o, \ell}(G|s) = \pi_o(G|s).
    \end{align*}
    A similar calculation can be carried out to obtain $\sigma^s(\mathcal{A} \times G) = \pi_n(G|s)$. Hence the claim is verified. 
\end{proof}

\subsection{Proof of Proposition \ref{prop: Recovery}}
\begin{proof}
When $L = 2$, recall the definition of $\rho^s$ and $\eta^s$ in \eqref{def: surplusmass}, we have
\begin{align*}
\rho^s &= (a_1^s - b_1^s)_+ \pi_{\t{o}, 1}(\cdot|s) + (a_2^s - b_2^s)_+ \pi_{\t{o}, 2}(\cdot|s), \\
\eta^s &= (b_1^s - a_1^s)_+ \pi_{\t{n}, 1}(\cdot|s) + (b_2^s - a_2^s)_+ \pi_{\t{n}, 2}(\cdot|s).
\end{align*}
Since $\pi_\t{n}(\cdot|s)$ and $\pi_\t{n}(\cdot|s)$ are probability measures, exactly one of two cases must hold: either (1) $a_1^s \geq b_1^s$ and $b_2^s \geq a_2^s$, or (2) $a_1^s \leq b_1^s$ and $b_2^s \leq a_2^s$. Consequently, $\rho^s$ and $\eta^s$ take one of two forms: either $\rho^s = (a_1^s - b_1^s) \pi_{\t{o}, 1}(\cdot|s)$ and $\eta^s = (b_2^s - a_2^s) \pi_{\t{n}, 2}(\cdot|s)$, or $\rho^s = (a_2^s - b_2^s) \pi_{\t{o}, 2}(\cdot|s)$ and $\eta^s = (b_1^s - a_1^s) \pi_{\t{n}, 1}(\cdot|s)$. Without loss of generality, we assume the former case. Given that $\gamma^s$ is a feasible transport plan between $\rho^s$ and $\eta^s$, it must concentrate on $\mA_1 \times \mA_2 \subseteq \mA_1 \times \mA$, thus we have 
\begin{align*}
     \int_{\mA \times \mA} c^{s}_1(x, y) \, \mathrm{d}\gamma^{s}(x, y) =\gamma^s(\mA_1 \times \mA)  \stackrel{(a)}{=} (a_1^s - b_1^s)\pi_{\t{o}, 1}(\mA_1) = a_1^s - b_1^s = \eqref{def: LearningCost}.          
\end{align*}
where equality (a) is by the definition of feasible transport plan in Definition \ref{def:feasible}. Note this holds for all feasible transport plan, it naturally becomes the optimal transport cost. 
    
    As for the transaction cost, since that $\lambda_{\ell}^s$ represents a feasible transport plan between the distributions $\pi_{\t{o}, \ell}(\cdot|s)$ and $\pi_{\t{n}, \ell}(\cdot|s)$ and both distributions are concentrated on $\mA_\ell$, we have $\lambda_\ell^s$ must concentrate on $\mA_\ell \times \mA_\ell$. Consequently, by the form of $\lambda^s$ defined \eqref{def: TransactionMeasure}, it follows that
    \begin{align*}
        \int_{\mA \times \mA} c^{s}_2(x, y) \, \mathrm{d}\lambda^{s}(x, y) =& \tau_1^s\lambda_1^s(\mA_1 \times \mA_1) + \tau_2^s\lambda_2^s(\mA_2 \times \mA_2)
        = \tau_1^s + \tau_2^s = \eqref{def: TransactionCost}.
    \end{align*}
    Again, since the calculation above does not depend on the specific choice of each $\lambda_\ell^s$, it coincides with the case when we choosing each $\lambda_\ell^s$ as the optimal one.  
\end{proof}

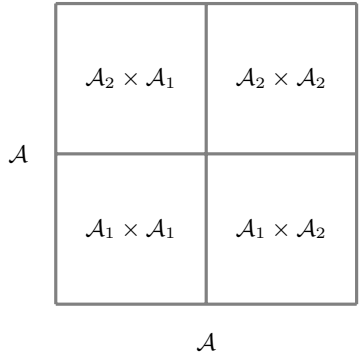
\begin{figure}
    \centering
\begin{tikzpicture}
    \draw[step=2cm,color=gray, very thick] (0,0) grid (4,4);
    \node[font = \small] at (1,3) {$\mA_2 \times \mA_1$};
    \node[font = \small] at (3,3) {$\mA_2 \times \mA_2$};
    \node[font = \small] at (1,1) {$\mA_1 \times \mA_1$};
    \node[font = \small] at (3,1) {$\mA_1 \times \mA_2$};

    \node[font = \small] at (-0.5,2) {$\mA$};
    \node[font = \small] at (2,-0.5) {$\mA$};

\end{tikzpicture}
    \caption{The product action space $\mA \times \mA$ is partitioned into four components.}
    \label{fig: support}
\end{figure}

\subsection{Proof of Proposition \ref{prop:ot}}
\begin{proof}
We show the statement by going through several steps. First we provide a novel perspective to decompose the cost of any feasible transport plan; then we calculate the cost of one specific transport plan, which corresponds to our proposal in the transport cost by \eqref{def: LearningCost} and \eqref{def: TransactionCost} up to any $L$ (in the sense that the plan exactly attains a cost of $F(\pi_\t{o}(\cdot|s), \pi_\t{n}(\cdot|s))$); finally we verifies that any other feasible transport plan can never achieve a cost strictly lower than the above case.

\tb{Step 1 (Decomposition of the cost).} For any feasible transport plan $\sigma^s=\gamma^s+\lambda^s$, by definition the cost is
\begin{align*}
&\int_{\mA \times \mA} c(x, y) \, \mathrm{d}\sigma^s(x, y)\\
=~& \int_{\mA \times \mA} \sum_{l=1}^L [I_{\mA_l\times\mA}(x,y)~ c(x, y)] \, \mathrm{d}\sigma^s(x, y)\\
=~& \sum_{l=1}^L \int_{\mA \times \mA}  I_{\mA_l\times\mA}(x,y)~ c(x, y) \, \mathrm{d}\sigma^s(x, y)\\
=~& \sum_{l=1}^L \int_{\mA_l \times \mA}  c(x, y) \, \mathrm{d}\sigma^s(x, y),
\end{align*}
where the first inequality is due to the fact that, for any valid partition $\{\mA_l\}_{l=1}^L$ of the set $\mA$, and any $x,y\in\mA$, one always has 
\begin{align*}
\sum_{l=1}^L I_{\mA_l\times\mA}(x,y)\equiv 1.    
\end{align*}

Now for convenience, we introduce a new notation by
\begin{align*}
d_l^s := \int_{\mA_l \times \mA}  c(x, y) \, \mathrm{d}\sigma^s(x, y),
\end{align*}
which immediately leads to the following simplified decomposition:
\begin{align*}
\int_{\mA \times \mA} c(x, y) \, \mathrm{d}\sigma^s(x, y)
= \sum_{l=1}^L d_l^s.
\end{align*}
To begin with, for each $l\in[L]$, $d_l^s$ just represents the total cost due to any transport which has a starting location inside $\mA_l$, which provides a novel and straightforward view of the total cost due to any feasible transport plan $\sigma$. In addition, as $\{\mA_l\}_{l=1}^L$ is a partition, the decomposition of total cost is naturally constructed, allowing a clear comparison between any two different transport plans, which will be showcased in the following steps.

\tb{Step 2 (The case of the proposed transport cost).} 
Now we specify one transport plan $\sigma^s=\gamma^s+\lambda^s$, where $\gamma^s$ is any feasible transport plan defined by \eqref{def: LearningPlan}, i.e. $\gamma$ is one feasible way to transport all the surplus mass $\{(a_\ell^s-b_\ell^s)_+\}_{\ell=1}^L$ respectively out of $\{\mA_\ell\}_{\ell=1}^L$; while $\lambda$ is one feasible choice according to \eqref{def: TransactionMeasure}, i.e. a way for rearrangement. As Proposition \ref{lemma: FeasibleTransportPlan} has already established that $\eta^s$ is a feasible transport plan, its cost is well-defined, which is further computed by the sum of all $d_\ell^s$, where, for each $\ell\in[L]$,
\begin{align*}
d_\ell^s
&=\int_{\mA_\ell \times \mA} c(x,y) \, \mathrm{d}\sigma^s(x, y)\\
&=\int_{\mA_\ell \times \mA} c_l \left[\sum_{i \neq j} I_{\mA_i \times \mA_j}(x, y)\right] + c_t \left[\sum_{i = 1}^L I_{\mA_i \times \mA_i}(x, y)\right] \, \mathrm{d}\sigma^s(x, y)\\
&=c_l \int_{\mA_\ell \times \mA} \left[\sum_{i \neq j} I_{\mA_i \times \mA_j}(x, y)\right] \, \mathrm{d}\sigma^s(x, y) + c_t \int_{\mA_\ell \times \mA} \left[\sum_{i = 1}^L I_{\mA_i \times \mA_i}(x, y)\right] \, \mathrm{d}\sigma^s(x, y)\\
&= c_l \left[\sum_{j \neq \ell} \int_{\mA_\ell \times \mA} I_{\mA_\ell \times \mA_j}(x, y) \, \mathrm{d}\sigma^s(x, y) \right] + c_t \int_{\mA_\ell \times \mA} I_{\mA_\ell \times \mA_\ell}(x, y) \, \mathrm{d}\sigma^s(x, y) \\ 
&= c_l \left[ \sum_{j=1}^L \int_{\mA_\ell \times \mA} I_{\mA_\ell \times \mA_j}(x, y) \, \mathrm{d}\sigma^s(x, y) \right] - c_l \int_{\mA_\ell \times \mA} I_{\mA_\ell \times \mA_\ell}(x, y) \, \mathrm{d}\sigma^s(x, y) + c_t \int_{\mA_\ell \times \mA} I_{\mA_\ell \times \mA_\ell}(x, y) \, \mathrm{d}\sigma^s(x, y)\\ 
&= c_l \int_{\mA_\ell \times \mA} \sum_{j=1}^L I_{\mA_\ell \times \mA_j}(x, y) \, \mathrm{d}\sigma^s(x, y) + (c_t - c_\ell) \int_{\mA_\ell \times \mA} I_{\mA_\ell \times \mA_\ell}(x, y) \, \mathrm{d}\sigma^s(x, y)\\ 
&= c_l \int_{\mA_\ell \times \mA} I_{\mA_\ell \times \mA}(x, y) \, \mathrm{d}\sigma^s(x, y) + (c_t - c_\ell) \int_{\mA_\ell \times \mA} I_{\mA_\ell \times \mA_\ell}(x, y) \, \mathrm{d}\sigma^s(x, y).
\end{align*}
Now for the first term, by the property of any feasible transport plan described in Definition~\ref{def:feasible}, we know
\begin{align*}
c_l \int_{\mA_\ell \times \mA} I_{\mA_\ell \times \mA}(x, y) \, \mathrm{d}\sigma^s(x, y)
&= c_l \sigma^s(\mA_\ell \times \mA)\\
&= c_l (\gamma^s+\lambda^s)(\mA_\ell \times \mA)\\
&= c_l [\gamma^s (\mA_\ell \times \mA)+\lambda^s (\mA_\ell \times \mA)]\\
&= c_l [\rho^s (\mA_\ell)+\lambda^s (\mA_\ell \times \mA)]\\
&= c_l \left[\sum_{i=1}^L (a_i^s-b_i^s)_+\pi_{\textrm{o}, i}(\mA_\ell|s) \right] +  c_l \left[\sum_{i=1}^L \tau_i^s \lambda_i^s \right] (\mA_\ell\times\mA)\\
&= c_l \left[\sum_{i=1}^L (a_i^s-b_i^s)_+\pi_{\textrm{o}, i}(\mA_\ell|s) \right] + c_l  \left[\sum_{i=1}^L (a^{s}_i \wedge b^{s}_i) \pi_{\t{o},i}(\mA_\ell|s) \right]\\
&= c_l [(a_\ell^s-b_\ell^s)_+ + a^{s}_\ell \wedge b^{s}_\ell].
\end{align*}
And then for the second term,
\begin{align*}
(c_t - c_\ell) \int_{\mA_\ell \times \mA} I_{\mA_\ell \times \mA_\ell}(x, y) \, \mathrm{d}\sigma^s(x, y)
&= (c_t - c_\ell) \sigma^s(\mA_\ell \times \mA_\ell)\\
&= (c_t - c_\ell) [\gamma^s(\mA_\ell \times \mA_\ell) + \lambda^s(\mA_\ell \times \mA_\ell) ].
\end{align*}
Due to the definition of $\gamma^s$ in \eqref{def: LearningPlan} and the property of $(a_\ell^s - b_\ell^s)_+$, we know there is at most one of $(a_\ell^s - b_\ell^s)$ and $(b_\ell^s - a_\ell^s)$ that is non-zero, and more importantly, for an arbitrarily given $\ell\in[L]$, without loss of generality, if we assume $a_\ell^s - b_\ell^s\geq0$, then we immediately know $(a_\ell^s - b_\ell^s)_+\geq0$, while $(b_\ell^s - a_\ell^s)_+=0$. In addition,
\begin{align*}
\gamma^s (\mA_i, \mA_i) 
&\leq \gamma^s(\mA, \mA_\ell)\\
&= \eta^s(\mA_\ell)\\
&= \sum_{i=1}^L (b_i^s-a_i^s)_+ \pi_{\textrm{n}, i}(\mA_\ell|s)\\
&= (b_\ell^s-a_\ell^s)_+ \pi_{\textrm{n}, \ell}(\mA_\ell|s)
=0\\
&\Rightarrow
\gamma^s(\mA_i, \mA_i)=0, \quad \textrm{for any}~ i\in[L].
\end{align*}
On the other hand, by similarly noticing that, for any $\ell\in[L]$, $\lambda^{s}_\ell$ vanishes on any $\mA_{\ell^\prime}\times\mA_{\ell^\prime}$ with $\ell^\prime\neq\ell$, we further know
\begin{align*}
\lambda^{s}(\mA_\ell \times \mA_\ell) 
= \sum_{i=1}^L \tau_{i}^s \lambda^{s}_i(\mA_\ell \times \mA_\ell)
= \tau_{\ell}^s \lambda^{s}_\ell(\mA_\ell \times \mA_\ell)
= \tau_{\ell}^s
= a_\ell^s \wedge b_\ell^s\\
\Rightarrow
(c_t - c_\ell) \int_{\mA_\ell \times \mA} I_{\mA_\ell \times \mA_\ell}(x, y) \, \mathrm{d}\sigma^s(x, y)
=(c_t - c_\ell) a_\ell^s \wedge b_\ell^s.
\end{align*}
Combining the above two terms,
\begin{align*}
d_\ell^s = c_l [(a_\ell^s-b_\ell^s)_+ + a^{s}_\ell \wedge b^{s}_\ell] + (c_t - c_\ell) a_\ell^s \wedge b_\ell^s
=  c_l (a_\ell^s-b_\ell^s)_+ + c_t (a_\ell^s \wedge b_\ell^s).
\end{align*}
Therefore, 
\begin{align*}
\int_{\mA \times \mA} c(x, y) \, \mathrm{d}\sigma^s(x, y) 
=\sum_{l=1}^L d_l^s
= c_l \sum_{\ell = 1}^L (a_\ell^s - b_\ell^s)_+ +  c_t \sum_{\ell = 1}^L a_\ell^s \wedge b_\ell^s = F(\pi_\t{o}(\cdot|s), \pi_\t{n}(\cdot|s)).
\end{align*}

\tb{Step 3 (Costs of other transport plans).} As we have already derived in the above step, for any feasible transport plan $\sigma$ (no longer to be restricted to the proposed ones due to \eqref{def: LearningPlan} and \eqref{def: TransactionMeasure}), we always have
\begin{align*}
\int_{\mA \times \mA} c(x, y) \, \mathrm{d}\sigma^s(x, y)
&= \sum_{l=1}^L d_l^s \quad \textrm{with}\\
d_\ell^s &= c_l \left[\sum_{j \neq \ell} \int_{\mA_\ell \times \mA} I_{\mA_\ell \times \mA_j}(x, y) \, \mathrm{d}\sigma^s(x, y) \right] + c_t \int_{\mA_\ell \times \mA} I_{\mA_\ell \times \mA_\ell}(x, y) \, \mathrm{d}\sigma^s(x, y).
\end{align*}
As in each problem setting, $\pi_\t{o}, \pi_\t{n}$ and $\{\mA_\ell\}$ are all given, we know $\{\pi_\t{o}(\mA_\ell)\}_{\ell=1}^L$ are all nonnegative constants with a sum of $1$. Further notice that
\begin{align*}
\sum_{j \neq \ell} \int_{\mA_\ell \times \mA} I_{\mA_\ell \times \mA_j}(x, y) \, \mathrm{d}\sigma^s(x, y) + \int_{\mA_\ell \times \mA} I_{\mA_\ell \times \mA_\ell}(x, y) \, \mathrm{d}\sigma^s(x, y)
&= \int_{\mA_\ell \times \mA} \sum_{j=1}^L  I_{\mA_\ell \times \mA_j}(x, y) \, \mathrm{d}\sigma^s(x, y)\\
&= \int_{\mA_\ell \times \mA} I_{\mA_\ell \times \mA}(x, y) \, \mathrm{d}\sigma^s(x, y)\\
&= \sigma(\mA_\ell \times \mA)\\
&= \pi_\t{o}(\mA_\ell|s).
\end{align*}
So if we denote
\begin{align*}
J_\ell^s:=\sum_{j \neq \ell} \int_{\mA_\ell \times \mA} I_{\mA_\ell \times \mA_j}(x, y) \, \mathrm{d}\sigma^s(x, y),
\end{align*}
we immediately have
\begin{align*}
d_\ell^s &= c_l J_\ell^s + c_t [\pi_\t{o}(\mA_\ell|s) - J_\ell^s].
\end{align*}
Before proceeding, we first show an intermediate statement:
\begin{align*}
J_\ell^s \geq (a_\ell^s - b_\ell^s)_+.
\end{align*}
This is because: first, if $a_\ell^s - b_\ell^s\leq0$, then as $J_\ell^s$ is defined through a sum of nonnegative measures, thus the statement holds. Then if $a_\ell^s - b_\ell^s>0$,
\begin{align*}
J_\ell^s
&=\sum_{j \neq \ell} \int_{\mA_\ell \times \mA} I_{\mA_\ell \times \mA_j}(x, y) \, \mathrm{d}\sigma^s(x, y)\\
&= \pi_\t{o}(\mA_\ell|s) - \int_{\mA_\ell \times \mA} I_{\mA_\ell \times \mA_\ell}(x, y) \, \mathrm{d}\sigma^s(x, y)\\
&= a_\ell^s - \sigma^s(\mA_\ell \times \mA_\ell).
\end{align*}
In addition, since 
\begin{align*}
\sigma^s(\mA_\ell \times \mA_\ell)\leq \sigma^s(\mA \times \mA_\ell)  =\pi_\t{n}(\mA_\ell|s) = b_\ell^s,
\end{align*}
we finally know $J_\ell^s \geq (a_\ell^s - b_\ell^s) = (a_\ell^s - b_\ell^s)_+$.

Now by knowing $d_\ell^s = c_l J_\ell^s + c_t [\pi_\t{o}(\mA_\ell|s) - J_\ell^s]$, together with the fact that $J_\ell^s \geq (a_\ell^s - b_\ell^s)_+$ and $c_l\geq c_t$, we know $d_\ell^s$ attains its minimum when $_\ell^s = (a_\ell^s - b_\ell^s)_+$, and such minimum is
\begin{align*}
c_l (a_\ell^s - b_\ell^s)_+ + c_t [\pi_\t{o}(\mA_\ell|s) - (a_\ell^s - b_\ell^s)_+]
&= c_l (a_\ell^s - b_\ell^s)_+ + c_t [a_\ell^s - (a_\ell^s - b_\ell^s)_+]\\
&= c_l (a_\ell^s - b_\ell^s)_+ + c_t (a_\ell^s\wedge b_\ell^s).
\end{align*}
As a result, for any feasible transport plan $\sigma$, its corresponding cost is lower bounded by
\begin{align*}
c_l \sum_{\ell = 1}^L (a_\ell^s - b_\ell^s)_+ +  c_t \sum_{\ell = 1}^L a_\ell^s \wedge b_\ell^s = F(\pi_\t{o}(\cdot|s), \pi_\t{n}(\cdot|s)),
\end{align*}
which shows that $F(\pi_\t{o}(\cdot|s), \pi_\t{n}(\cdot|s))$ is the solution to the OT problem.
\end{proof}

\subsection{Global and Local Switching Costs are Special Cases}
\begin{lemma}\label{prop: SCRecover}
Let $\pi_1$ and $\pi_2$  be two policies on a state space $\mS$ with finite cardinality. When considering policy switching from $\pi_1$ to $\pi_2$, recall that the induced global and the local switching costs are defined as 
\begin{align*}
    C^{\mathrm{gl}}{(\pi_1, \pi_2)} = I_{\{\pi_1 \not= \pi_2\}}, \quad C^{\mathrm{loc}}{(\pi_1, \pi_2)} = \sum_{s\in \mS} I_{\{\pi_1(\cdot|s) \not= \pi_2(\cdot|s)\}}.
\end{align*}
Then one can get back $C^{\mathrm{gl}}$ and $C^{\mathrm{loc}}$ as a specific case of the cost family \eqref{def: AggregateCost} by the parameters given in Table~\ref{tab: CoveredSC}.
\end{lemma}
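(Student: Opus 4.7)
The plan is to proceed by direct substitution of the parameters listed in Table~\ref{tab: CoveredSC} into the cost family \eqref{def: AggregateCost}--\eqref{def: AggregateCost:F}, and verify in each of the two rows (local and global) that the right-hand side collapses to the corresponding expression in \eqref{eq:local-and-global-cost} restricted to two policies. Since $c_t$ multiplies $\mT\equiv 0$ in both rows, its value is irrelevant and the inner aggregation reduces to $F(\pi_1(\cdot|s), \pi_2(\cdot|s)) = I_{\{\pi_1(\cdot|s)\neq\pi_2(\cdot|s)\}}$. With $f\equiv 1$ and $\mu = \mathrm{Unif}(\mS)$ on the finite state space, the integral evaluates to
\begin{align}
\int_{\mS} f(s) F(\pi_1(\cdot|s), \pi_2(\cdot|s))\,\t{d}\mu(s) = \frac{1}{|\mS|}\sum_{s\in\mS} I_{\{\pi_1(\cdot|s)\neq \pi_2(\cdot|s)\}}.
\end{align}

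For the local row I would then apply $\sigma(x)=|\mS|x$, which cancels the $1/|\mS|$ factor and yields $\sum_{s\in\mS} I_{\{\pi_1(\cdot|s)\neq\pi_2(\cdot|s)\}} = C^{\t{loc}}(\pi_1,\pi_2)$, matching \eqref{eq:local-and-global-cost} in the two-policy case. For the global row I would apply $\sigma(x) = I_{\Rp}(x)$ to the same integral, and the key (essentially definitional) observation is that, because the summand is a non-negative integer,
\begin{align}
\frac{1}{|\mS|}\sum_{s\in\mS} I_{\{\pi_1(\cdot|s)\neq \pi_2(\cdot|s)\}} > 0 \iff \exists\, s\in\mS:\ \pi_1(\cdot|s)\neq \pi_2(\cdot|s) \iff \pi_1\neq \pi_2,
\end{align}
so the indicator $I_{\Rp}(\cdot)$ applied to the integral equals $I_{\{\pi_1\neq \pi_2\}} = C^{\t{gl}}(\pi_1,\pi_2)$.

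There is no genuine obstacle here beyond bookkeeping: the statement is a direct verification that the listed parameter choices recover the two formulas. The only minor point worth being careful about is the equivalence between policy equality and pointwise equality of the conditional distributions, which is built into the definition of a policy $\pi=\{\pi(\cdot|s): s\in\mS\}$ stated in Section~\ref{sec: RL}; this is what allows the outer activation $\sigma=I_{\Rp}$ to convert a state-wise disagreement indicator into a global one. The finiteness of $\mS$ ensures that $\mathrm{Unif}(\mS)$ is well defined and that the normalising factor $1/|\mS|$ is cancelled exactly by the choice $\sigma(x)=|\mS|x$ in the local case.
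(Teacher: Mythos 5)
Your proposal is correct and follows essentially the same route as the paper's own proof: direct substitution of the Table~\ref{tab: CoveredSC} parameters, evaluation of the integral against $\mathrm{Unif}(\mS)$ to obtain $\frac{1}{|\mS|}\sum_{s\in\mS} I_{\{\pi_1(\cdot|s)\neq\pi_2(\cdot|s)\}}$, and application of the respective activations $\sigma(x)=|\mS|x$ and $\sigma(x)=I_{\Rp}(x)$. Your explicit remark that positivity of the sum is equivalent to $\pi_1\neq\pi_2$ is the same step the paper performs implicitly in its final equality.
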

\begin{proof}
    One can recover the global switching cost as follows.
        \begin{align*}
        C(\pi_{\mathrm{1}}, \pi_{\mathrm{2}}) &= \sigma\Bigl(\int_{\mS} f({s}) F\bigl(\pi_1(\cdot|s) ,\pi_2(\cdot|s)\bigr) \, \d\mu(s)\Bigr)
        =I_{\{ \sum_{s\in \mS}\frac{1}{|\mS|} I_{\{\pi_1(\cdot|s) \not= \pi_2(\cdot|s)\}} >0 \}} \\ 
        &=I_{\{\sum_{s\in \mS} I_{\{\pi_1(\cdot|s) \not= \pi_2(\cdot|s)\}}>0\}}
        = C^{\mathrm{gl}}(\pi_1, \pi_2).
    \end{align*}
    One can get back the local switching cost as follows.
    \begin{align*}
        C(\pi_{\mathrm{1}}, \pi_{\mathrm{2}}) &= \sigma\Bigl(\int_{\mS} f({s}) F(\pi_1(\cdot|s) ,\pi_2(\cdot|s)) \, \mathrm{d}\mu({s})\Bigr)
        =|\mS| \sum_{s\in \mS}\frac{1}{|\mS|} I_{\{\pi_1(\cdot|s) \not= \pi_2(\cdot|s)\}} \\
        &=\sum_{s\in \mS} I_{\{\pi_1(\cdot|s) \not= \pi_2(\cdot|s)\}} = C^{\mathrm{loc}}(\pi_1, \pi_2). 
    \end{align*}

\end{proof}

\end{document}